\newcounter{fs}
\newcounter{mf}
\def\eqref#1{equation~\ref{#1}}
\def\1{\bm{1}}
\DeclareMathAlphabet{\mathsfit}{\encodingdefault}{\sfdefault}{m}{sl}
\SetMathAlphabet{\mathsfit}{bold}{\encodingdefault}{\sfdefault}{bx}{n}
\newcommand{\E}{\mathbb{E}}
\newtcbox{\mymath}[1][]{%
    nobeforeafter, math upper, tcbox raise base,
    enhanced, colframe=black!30!black,
    colback=white!30, boxrule=1pt,
    #1}
\newtcbox{\mywboxtext}{on line,colback=white,colframe=black,size=fbox,arc=3pt,boxrule=0.8pt}
\newcommand{\mywboxmath}[1]{\mywboxtext{$#1$}}
\newcommand*{\ie}{i.e.\@\xspace}
\newcommand{\eat}[1]{}
\newtheorem{theorem}{Theorem}
\definecolor{lightblue}{HTML}{18282e}
\definecolor{lighterblue}{HTML}{f2fafd}  
\newtcolorbox{abox}{colback=lighterblue,colframe=lightblue}
\icmltitlerunning{User-defined Event Sampling and Uncertainty Quantification in Diffusion Models for Physical Dynamical Systems}
\begin{document}

\twocolumn[
\icmltitle{User-defined Event Sampling and Uncertainty Quantification in Diffusion Models for Physical Dynamical Systems}

\icmlsetsymbol{equal}{*}

\begin{icmlauthorlist}
\icmlauthor{Marc Finzi}{google,nyu,internship}
\icmlauthor{Anudhyan Boral}{google}
\icmlauthor{Andrew Gordon Wilson}{nyu}
\icmlauthor{Fei Sha}{google}
\icmlauthor{Leonardo Zepeda-N\'u\~nez}{google}
\end{icmlauthorlist}

\icmlaffiliation{nyu}{Department of Computer Science, New York University, NYC, USA}
\icmlaffiliation{google}{Google Research, 1600 Amphitheatre Pkwy Mountain View CA 94043, USA}
\icmlaffiliation{internship}{Work done during an internship at Google Research}

\icmlcorrespondingauthor{Marc Finzi}{maf820@nyu.edu}

\icmlkeywords{Machine Learning, ICML}

\vskip 0.3in
]



\printAffiliationsAndNotice{}  

\begin{abstract}
Diffusion models are a class of probabilistic generative models that have been widely used as a prior for image processing tasks like text conditional generation and inpainting. We demonstrate that these models can be adapted to make predictions and provide uncertainty quantification for chaotic dynamical systems. In these applications, diffusion models can implicitly represent knowledge about outliers and extreme events; however, querying that knowledge through conditional sampling or measuring probabilities is surprisingly difficult. Existing methods for conditional sampling at inference time seek mainly to enforce the constraints, which is insufficient to match the statistics of the distribution or compute the probability of the chosen events. To achieve these ends, optimally one would use the conditional score function, but its computation is typically intractable. In this work, we develop a probabilistic approximation scheme for the conditional score function which provably converges to the true distribution as the noise level decreases. With this scheme we are able to sample conditionally on nonlinear user-defined events at inference time, and matches data statistics even when sampling from the tails of the distribution.
\end{abstract}

\section{Introduction}

Accurately predicting trajectories for chaotic dynamical systems is a great scientific challenge of societal importance. For instance, despite the impressive progress of numerical weather prediction \citep{richardson1922weatherprediction, bauer2015quiet}, current methodologies still struggle to forecast extreme events. Heat waves \citep{perkins2013measurement}, flooding \citep{mosavi2018flood}, and oceanic rogue waves \citep{dysthe2008oceanic}, are examples of catastrophic events of enormous socio-economic impact. Part of the difficulty of forecasting such extreme events can be attributed to the chaotic behavior of the dynamical systems associated with weather prediction \citep{lorenz1963deterministic, hochman2019weatterdynsystem}. 

Furthermore, extreme events are often located at the tail of the distribution and are non-trivial to characterize, rendering them hard to sample efficiently, which in return has spun several methods seeking to attenuate this issue ~\citep{Kahn1951,rosenbluth1955,farazmand2019extreme,Qi2020mlextremeevents}.  Most of the methods above are based on rejection-sampling~\citep{hastings1970mcmc,Rossky1978langevinmcmc}, whose cost, measured as the number of on-demand large-scale simulations of the dynamical system, increases as events become rarer, rapidly becoming prohibitive.

Recent advances in deep generative models, particularly diffusion-based models, have shown remarkable results in capturing statistics of high-dimensional variables (such as images) and generating new samples from the learned probabilistic models~\citep{sohl2015deep,ho2020denoising,song2019generative,song2020score}. In constrast with GANs \citep{goodfellow2020generative} which often struggle with dropping modes that are difficult to model, diffusion models have proven better at capturing the full diversity of the data. 

In this paper, we investigate 
the application of diffusion models to modeling trajectories.
In particular, we are interested in whether such models can be used as surrogate models for the physical systems. We consider three classical dynamical systems: Lorenz strange attractors (``butterfly'') (\autoref{fig:front} left), the double pendulum, and FitzHugh-Nagumo neuron model  (\autoref{fig:front} right). Even though these models are fairly simple they retain the core difficulty of more complex ones, e.g., the first two exhibit chaotic trajectories while the last one exhibits extreme events.

Concretely, we study two questions. First, \emph{can we learn diffusion-based generative models directly from trajectories without explicit knowledge of the underlying differential equations and sample high-fidelity trajectories from the models?} Second, \emph{can we condition the sampling process to generate user-specified events --- trajectories of certain properties --- without the need to retrain the model?} Positive answers will enable researchers and practitioners to query the learned models with amortized computational costs, and the flexibility of studying events in tails of the distribution.

We answer the first question by building diffusion models capable of learning the trajectories of the three classical systems mentioned above. The models can produce trajectories with low error and calibrated uncertainties even when the underlying system is chaotic.

We answer the second question by deriving an approximation scheme to compute the conditional score function that enables conditioning on user-defined nonlinear statistics at inference time. The key idea is to use moment-matching to derive the distribution of the denoised sample conditioned on a noised sample, which is typically intractable. We show that the approximation becomes exact as the noise scale
vanishes.
Using this method, we can directly sample events in the tail of the distribution
and quantify their likelihood.

\begin{figure}[!t]
\centering
\begin{tabular}{cc}
    \includegraphics[width=0.22\textwidth]{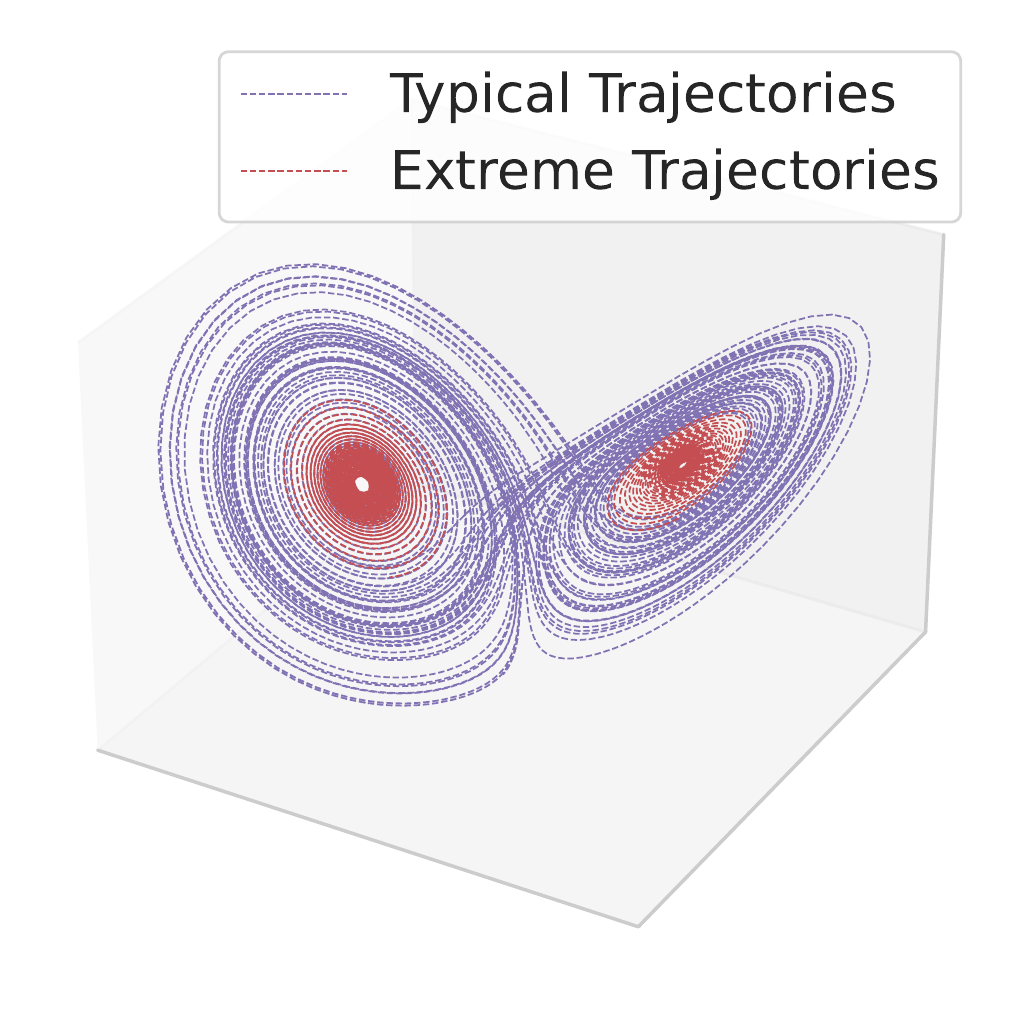} &  
    \includegraphics[width=0.22\textwidth]{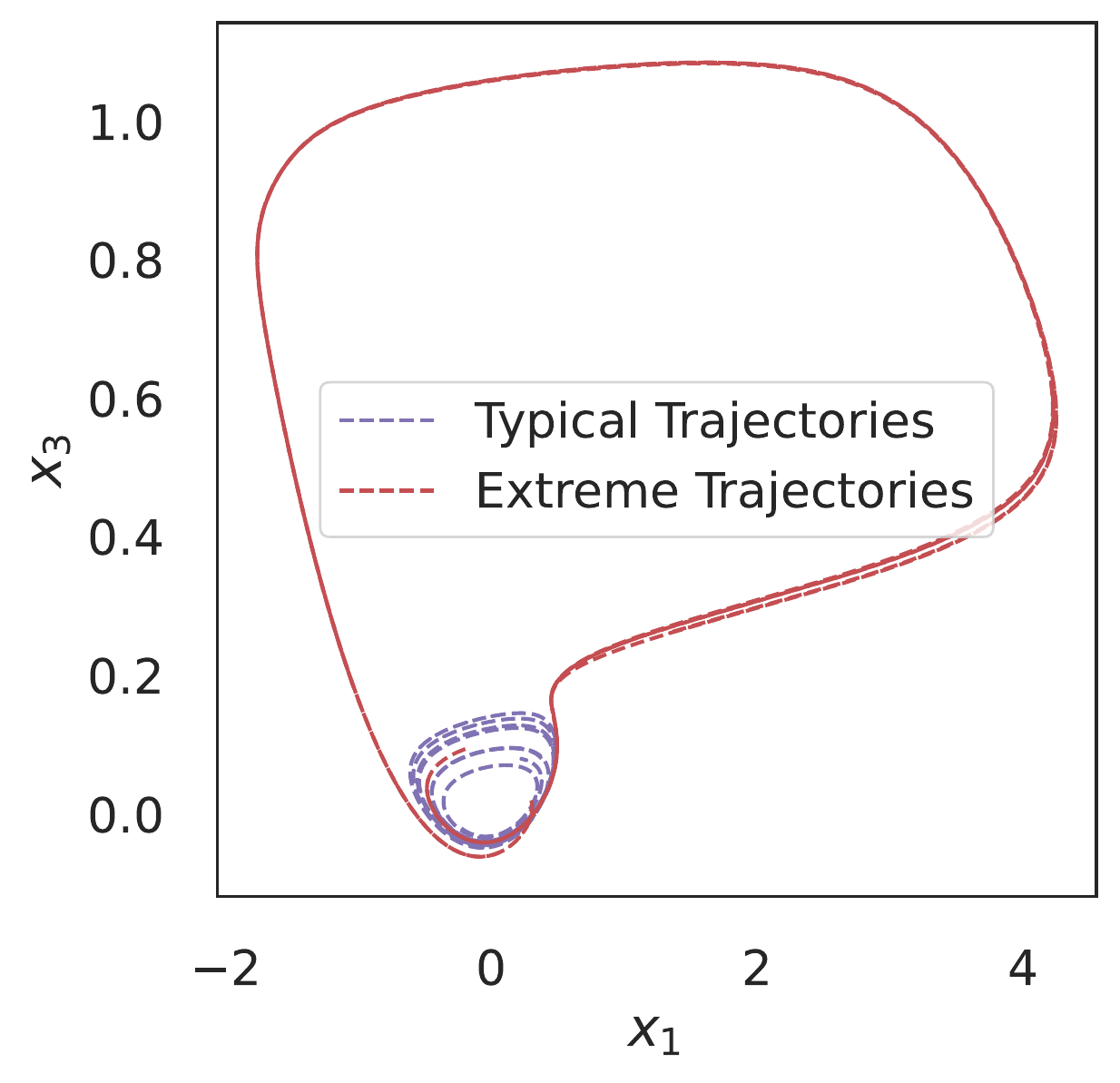} \\
    \end{tabular}
    
\caption{Chaotic nonlinear dynamical systems often have outlier events, and these events can be difficult to predict due to the chaotic nature of the system. \textbf{Left:} Trajectories of the Lorenz attractor, split into the trajectories which do not cross over to the opposite arm of the attractor in a given time horizon vs those that do. \textbf{Right:} Trajectories of the FitzHugh-Nagumo model, which feature the rare and unpredictable neuron spikes shown in red, which are nestled in with typical trajectories shown in purple.}
\label{fig:front}
\end{figure}
\section{Related Work}

\textbf{Denoising Diffusion Probabilistic Models}
Denoising Diffusion Probablisitic Models (DDPMs) \citep{sohl2015deep,ho2020denoising,song2019generative} construct a forward process where each training example from the data distribution is sequentially corrupted by increasingly larger noise. At the final step of this process, the sample is distributed according to a standard Gaussian distribution, completely erasing the data. The reverse process defines a generative model where, starting from a standard Gaussian sample, we follow the reverse denoising process using a neural network. These diffusion models are trained using score matching \citep{hyvarinen2005estimation, vincent2011connection} or denoising \citep{ho2020denoising} objectives. \citet{song2020score} introduce a continuous formulation of the diffusion process using stochastic differential equations (SDEs). Furthermore, by leveraging the connection with Neural Ordinary Differential Equations (NeuralODEs) \citep{chen2018neural}, \citet{song2020score} show how to perform exact likelihood computation.

\textbf{A posteriori conditioning}
Inference time (\textit{a posteriori}) conditioning is a promising and powerful paradigm for training large prior models and using them to perform different downstream tasks like inpainting, colorization, reconstruction, and solving general inverse problems. In contrast with (\textit{a priori}) train time conditioning, where the form of the conditioning must be known and used at training time, inference time conditioning enables using a unconditional diffusion model as a \emph{prior} and then conditioning it on different observations at inference time.

\citet{song2020score} provide a crude approximation of the conditional score function to perform inpainting. \citet{meng2021sdedit} use a similar method, but instead they perform unconditional generation from a noised version of a guide image. \citet{chung2022score} and \citet{chung2022come} generalize the method to linear transformations (such as in MRI reconstruction) and apply an explicit projection onto the constraints at each iteration. As shown by \citet{lugmayr2022repaint} and others \citep{chung2022improving}, these projection strategies (e.g. replacing noised versions of known pixels with the Gaussian samples $p(x_t|x_0)$ for inpainting) produces samples that lack global coherence: inpainted regions do not properly integrate information from the known regions. \citet{lugmayr2022repaint} address this by iterating forwards and backwards multiple times in order to better harmonize the information. \citet{chung2022improving} take a different approach with manifold constrained gradients (MCG): they combine constraint projection with an additional term that encourages the conditional sample iterates $x_t$ to lie on the data manifold given by $p(x_t)$. Both methods address the global coherency issue of prior methods; however, they are only valid for linear equality constraints and it is unclear how the samples relate to the true conditional distribution of the generative model.

\citet{graikos2022diffusion} take a more general approach for inverse problems using optimization, but only produce point estimates. Recently, \citet{chung2022diffusion} proposed an improved version of MCG that removes the projections and enables it to work both when there is measurement noise and nonlinear constraints. This method makes a probabilistic approximation of the score function, which becomes degenerate as the measurement noise goes to $0$, thus limiting its applicability for tail sampling of the deterministic events we consider here.

\textbf{Likelihoods} Even though diffusion models are not explicitly designed as likelihood models, they can be used to compute the likelihood $p(x_0)$ of a particular data point according to the model.
In order to obtain more than a lower bound on the likelihood, one needs to exploit the connection between the probability flow ODE and continuous normalizing flows \citep{grathwohl2018ffjord} as done in \citet{song2020score}, unfortunately, such connection is only available in continuous time.

\textbf{Extreme event prediction}
\citet{Qi2020mlextremeevents} apply deep convolutional neural networks to the prediction of extreme events in dynamical systems. \citet{wan2018data} use reduced order modelling in conjunction with LSTM-RNN networks to model extreme events. \citet{asch2022model} tackle the problem of lack of data when training deep networks for extreme event prediction.
\citet{doan2021short} use reservoir-computing based model to forecast extreme events.
\citet{guth2019machine} use machine learning to detect extreme events in advance from a given trajectory; rather than modeling trajectories of dynamical systems directly.
In contrast to the above works, our approach uses likelihood-based generative models which allows us to provide probability estimates for the extreme event occurring. Deep generative models, particularly NeuralODEs \citep{chen2018neural,yildiz2019ode2vae} have shown promise in modeling dynamical systems \citep{lai2021structural}, but their effectiveness in capturing tail events is yet to be ascertained.

\section{Diffusion Model for Dynamical Systems}\label{sec:background}

\paragraph{Background}  Diffusion models are composed of a forward noising process and its corresponding backward denoising process, which we describe below in its continuous-time formulation \citep{song2020score}.

The forward process evolves a given clean signal $x_0 \in \mathbb{R}^d$ from time $t=0$ to $t=1$ 
via the Itô Stochastic Differential Equation (SDE) 
\begin{equation*}
  dx_t  = f(x_t, t) dt +g(t) dW,  
\end{equation*}
where $W$ is the Wiener process, adding noise at each step.  Ultimately, $x_t$, the signal at time $t$, is a transformed (due to the drift term $f(x,t)$) and noised (due to the diffusion term $g(t) dW$) version of $x_0$. These values are also chosen so that the marginal distribution $p(x_1)$ is simply a spherical Gaussian. Here we denote the distribution of the data given the noise level at time $t$ as $p(x_t)$, though this distribution depends on $t$ and is sometimes written as $p_{X_t}(x_t)$ or $p_t(x_t)$.

If $f(x,t)$ is affine, the noise kernel $p(x_t |x_0)$ can be computed in closed-form. In this work, we define
\begin{equation}\label{eq:schedules}
f(x, t) =\tfrac{\dot{s}_t}{s_t} x \,\,\text{and}\,\, g(t)^2 = \sigma_t \dot\sigma_t - \sigma_t^2 \tfrac{\dot{s}_t}{s_t}.
\end{equation}
If these relations are specified, then the noise kernel has a simple form:
\begin{equation}
    p(x_t|x_0) = \mathcal{N}(x_t; s_tx_0,\sigma_t^2I).
\end{equation}
Namely, the diffusion model describes how $x_0$ is scaled and blurred. The derivation of \autoref{eq:schedules} can be found in \citet{sarkka2019applied} and specific choices of $s_t$ and $\sigma_t$ are described in \citet{song2020score,karras2022elucidating}.

The reverse process removes the noise from data. Specifically, given a noisy sample $x_1$ from the marginal distribution $p(x_1) = \mathcal{N}(0, \sigma_1^2I)$, the backward process evolves $x_1$ to $x_0$ by the following SDE
\begin{equation}\label{eq:sde}
dx_t = \big(f(x_t,t)-g(t)^2\nabla_{x_t} \log p(x_t)\big)dt +g(t)  d\bar{W},
\end{equation}
where $\bar{W}$ is the Wiener process running backwards.  $\nabla_{x_t} \log p(x_t)$ is the score function for the noised data, defined as the gradient of $\log p(x_t)$, the marginal probability of $x_t$. The diffusion model can be seen as a sequence of denoising steps according to the model score $s_\theta(x_t, t)$ aimed to match the noised score function of the true data $\nabla_{x_t} \log p_\mathrm{data}(x_t)$. This can be achieved with standard optimization algorithms on the score matching loss. From now on we will refer to $\nabla_{x_t} \log p(x_t)$ not as the noised scores of the data distribution, but as the noised scores of the model distribution.

We use the continuous time score matching formulation of diffusion models \citep{song2020score} to enable likelihood computations, and we train using the score matching loss
\begin{equation*}
    L(\theta) = \E_{(t,x_0,x_t)}\|s_\theta(x_t,t)-\nabla_{x_t}\log p(x_t|x_0)\|^2/\sigma_t^2,
\end{equation*}
where $x_0 \sim \mathcal{D}$ (the data distribution), 
$x_t \sim p(x_t|x_0)$, and $t\sim \mathrm{U}[0,1]$. For hyperparameters and additional training setup, see \autoref{app:hyperparams}. After training, we sample from the model using the Euler-Maruyama integrator with $1000$ uniformly spaced timesteps applied to the SDE in \autoref{eq:sde}.

\paragraph{Application to Dynamical Systems}
The trajectory of a dynamical system for a given initial condition $x(0)$, is the function  $x(\tau): [0, T] \rightarrow \mathbb{R}^d $, which is the solution to 
\begin{equation*}
    \dot{x}(\tau) = g(x(\tau),\tau),
\end{equation*}
for a given dynamics function $g$. We use ODE time $\tau$ to distinguish it from the diffusion time $t$. Note that the initial condition $x(0)$ should not be confused with $x_0$, where the latter refers to a noise free data point. 

We suppose that the initial condition $x(0)$ follows a certain distribution, which in return generates a distribution of trajectories. We assume each trajectory is discretized into $m$ timesteps, yielding a $m\times d$-dimensional array. We use diffusion models to model the collection of the trajectories. The architecture of the diffusion model's score function is described in \autoref{app:architecture}.

\section{A Posteriori Conditioning}
\label{sMethod}

Once $p(x)$ is learnt, we would like to obtain samples from it with properties of interest. Abstractly, we seek
\begin{equation*}
    p( x_0| {E})
\end{equation*}
where the property ${E}$ is a set given by ${E} = \{x_0: C(x_0) = y\}$, for equality constraints or ${E} = \{x_0: C(x_0) \le y\}$, for inequality constraints, for some smooth function $C:\mathbb{R}^{md} \rightarrow \mathbb{R}^n$.
This construction is fairly general and it can be used for different downstream applications, e.g., in the task of image inpainting, the property $C(x_0)=y$ encodes whether the observed portion of a sampled image $x_0$ corresponds to known pixel values $y$.

In order to perform \emph{a posteriori} conditional sampling using the learned model, we use the score function of the conditional distribution
\begin{align}
\nonumber
     \nabla_{x_t} \log p(x_t|{E}) & = \nabla_{x_t} \log p({E}|x_t) + \nabla_{x_t}\log p(x_t)  \\ 
     &= \nabla_{x_t} \log p({E}|x_t) + s_\theta(x_t, t) \label{ConstrainedScore} 
\end{align}
in the reverse diffusion process.
The challenge is to compute the first term while knowing the definition of ${E}$ only \emph{after} $p(x_t)$ is learned (without knowledge of ${E}$).

Directly computing this quantity is hard: $p({E}|x_t) =  \int \mathbb{I}_{[x_0\in E]}p(x_0|x_t) dx_0$, which is intractable.  A naive 
approach would be to use $p({E}|x_0)$ in place of $p({E}|x_t)$.
However, this approach leads to conditional samples lacking global coherence \citep{lugmayr2022repaint,chung2022improving}.
Instead, we derive an approximation to  $p(x_0|x_t)$ based on moment matching, which we use to perform conditioning with linear and nonlinear equality constraints as well as inequality constraints.

\subsection{Moment-matching Based Approximation}

We can view the forward diffusion process as a Bayesian inference task:
\begin{equation}
    x_0 \sim p(x_0),\quad p(x_t|x_0) = \mathcal{N}(x_t; s_t x_0, \sigma_t^2 I).
\end{equation}
To compute the mean of $p(x_0|x_t)$, we apply Tweedie's formula~\citep{robbins1992empirical,efron2011tweedie} 
\begin{align}
    \E[x_0|x_t] & = \frac{x_t+\sigma_t^2 \nabla_{x_t} \log p(x_t)}{s_t} := \hat{x}_0(x_t),
\end{align}
where $\hat{x}_0$ represents the best guess for $x_0$ given a value of $x_t$.  
Furthermore, Tweedie's formula can also be applied to the higher moments.
As we derive in \autoref{app:tweedies}, the conditional covariance matrix can be expressed exactly as
\begin{equation}
    \mathrm{Cov}[x_0|x_t]=\bigg[\frac{\sigma_t^2}{s_t^2}\big(I+\sigma_t^2 \nabla^2 \log p(x_t)\big)\bigg] := \hat{\Sigma}(x_t),
\end{equation}
where $\nabla^2 \log p(x_t)$ is the Hessian of the log probability, or, equivalently, the Jacobian of the score function.

Using these two expressions for the conditional mean and covariance of $x_0$, we can approximate $p(x_0|x_t)$ with a Gaussian
\begin{equation}
   p(x_0|x_t)  \approx \mathcal{N}(\hat{x}_0,\hat{\Sigma}),
\end{equation}
which can be conveniently applied to constraining $x_0$ to satisfy the desired property ${E}$.

\subsection{Linear Equality Constraints}

As an example, we consider the goal of imposing the set of linear constraints ${E}=\{x_0: Cx_0 = y\}$ onto samples from the diffusion model for a given constraint matrix $C \in \mathbb{R}^{r\times d}$ and $y\in \mathbb{R}^r$. 
\begin{figure*}[ht!]
    \centering
    \includegraphics[width=\textwidth]{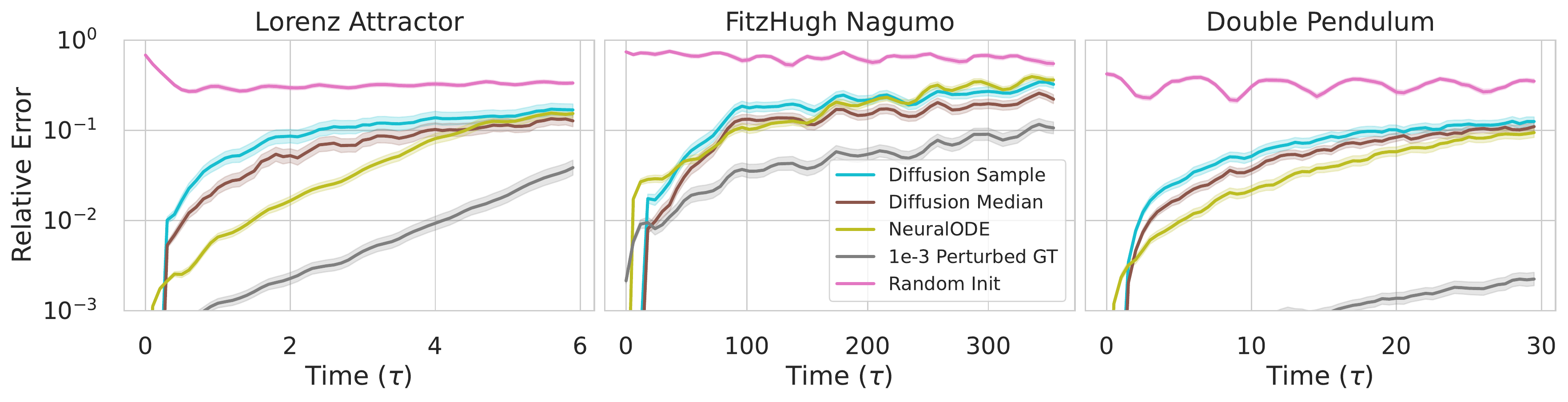}
    \caption{Relative error of sample diffusion model samples and pointwise median of $20$ samples when conditioned on initial condition compared to ground truth trajectories, NeuralODE rollouts, as well as errors for small perturbations of the initial condition evolved with the simulator, and independently sampled initial conditions for comparison. Shaded lines show two standard errors computed in log space. Diffusion model is evaluated on \textbf{Left:} Lorenz attractor, \textbf{Middle:} Fitzhugh-Nagumo model, and \textbf{Right:} double pendulum. The diffusion model and NeuralODE perform similarly, despite the diffusion model lacking ODE specific inductive biases of NeuralODE.
    }
    \label{fig:rollout_error}
\end{figure*}
The linear transformation $Cx_0$ leads to yet another Gaussian
\begin{equation}
    p(Cx_0|x_t) \approx \mathcal{N}(C\hat{x}_0,C\hat{\Sigma}C^\top).
\end{equation}
Both $\hat{x}_0$ and $\hat{\Sigma}$ depend on $x_t$, and the matrix $C\hat{\Sigma}C^\top$ can be computed using automatic differentiation involving the Jacobian of the score function. Specifically, the matrix $C\hat{\Sigma}$ can be computed as the Jacobian of the map $x_t \mapsto (\sigma_t^2/s_t)C\hat{x}_0$.  We now have,
\begin{equation}
    \nabla_{x_t} \log p({E}|x_t) \approx  \nabla_{x_t}\log \mathcal{N}(y; C\hat{x}_0,C\hat{\Sigma}C^\top )
    \label{eLinearConstraint}
\end{equation}
This enables using the modified score function ~\ref{ConstrainedScore} to sample from the constrained generative process.

\emph{Remark} In the case of linear constraints studied here, our work generalizes the recent work on constraining samples conditioned on linear constraints to be consistent with the data manifold~\citep{chung2022improving}. Specifically, if we approximate the Hessian as $\sigma_t^2 \nabla^2 \log p(x_t) \approx \lambda I$, we arrive at
\begin{equation*}
   \nabla_{x_t}\log p(x_t|y)  = s_\theta(x_t,t) - \frac{s_t^2}{2(1+\lambda)\sigma_t^2} \nabla_{x_t}\|C\hat{x}_0-y\|^2,
\end{equation*}
which reproduces eq.(14) in~\citep{chung2022improving}, ignoring constraint projections, and up to a scaling matrix $W$ 
lacking the ${s_t^2}/{\sigma_t^2}$ factor. In \autoref{app:linear_constraints_projection}, we show how the additional constraint projection for linear constraints arises under different circumstances with our moment matching approximation. 

However, using \citep{chung2022improving} or \citep{chung2022diffusion} directly with adaptive step size integrators in the continuous time formulation leads to numerical issues due to the stiffness of the problem; the ODE integrator step sizes shrink to zero and the integration does not complete. We explore these issues further in \autoref{app:convergence}. Observing that the ratio $s_t^2/\sigma_t^2 = \mathrm{SNR}$ (signal to noise ratio) typically varies over $12$ orders of magnitude in the region $t \in (0,1]$, we can now understand how the misscaling of this term leads to numerical problems. Adding the additional scaling factor $s_t^2/\sigma_t^2$ enables us to use the method in continuous time. Additionally, we can now leverage our probabilistic interpretation and the full covariance matrix to determine how to condition on nonlinear and inequality constraints.

\subsection{Nonlinear Equality Constraints}\label{subsec:nonlinear}

For a set of nonlinear constraints $C(x_0)=y$, we approximate $C(x_0)$ with its first-order Taylor expansion:
\begin{equation}
C(x_0) \approx  C(\hat{x}_0) + \nabla C\cdot (x_0 - \hat{x}_0).
\end{equation}
The Jacobian $\nabla C$ is evaluated at $\hat{x}_0$. In this ``linearized'' constraint, we approximate the desired probability using the result from the previous section on linear constraints:
\begin{equation}
 p(C(x_0)=y|x_t) \approx \mathcal{N}(C(\hat{x}_0), \nabla C\, \hat{\Sigma}\,\nabla C^\top).
 \label{eNonlinearConstraint}
\end{equation}

The quality of the approximation depends on how much the function $C$ varies over the scale of the variance of $x_0$, which we evaluate in \autoref{subsec:gaussian_approx}.

\subsection{Inequality Constraints}\label{sec:discrete_conditioning}

With the Gaussian approximations of \autoref{eLinearConstraint} and \autoref{eNonlinearConstraint} for linear and nonlinear equality constraints, we can also handle inequality constraints such as $E = \{x_0:  C(x_0) > y\}$. For a one-dimensional inequality constraint, 
\begin{equation*}
 p(E|x_t) = p(C(x_0)>y|x_t) \approx \Phi\bigg(\frac{C(\hat{x}_0)-y}{\sqrt{\nabla C^\top \hat{\Sigma}\nabla C}}\bigg),
\end{equation*}
where $\Phi$ is the Gaussian CDF.

Plugging the above into the score function for sampling conditional distribution \autoref{ConstrainedScore}, we can directly sample from the tails of the distribution according to any user defined nonlinear statistic $C(\cdot)$, focusing the model on extreme and rare events, using the conditional scores
\begin{equation*}
    \nabla_{x_t}\log p(x_t|E) \approx s_\theta(x_t,t) + \nabla_{x_t}\log \Phi\bigg(\frac{C(\hat{x}_0)-y}{\sqrt{\nabla C^\top \hat{\Sigma}\nabla C}}\bigg).
\end{equation*}
\eat{
When used in the context of understanding outliers and extreme events, we care not about the particular value that the sample has, but rather that it exceeds some threshold or lies in some interval.
Since we have this probabilistic approximation, we can employ it not just for equality constraints typical of inverse problems, but also for these inequality constraints. 

Consider the one dimensional inequality constraint $E = [C(x)>y]$ for some (possibly nonlinear) statistic $C$ exceeding some threshold (such as the temperature exceeds a certain threshold in a certain region for a weather model).
From our discussion on nonlinear constraints, $p(C(x_0)|x_t) \approx \mathcal{N}(C(\hat{x}_0),\nabla C^\top \hat{\Sigma}\nabla C)$. Therefore the constraint likelihood on the noised data can be well approximated as
\begin{equation*}
    p(E|x_t) = p(C(x_0)>y|x_t) = \Phi\bigg(\frac{C(\hat{x}_0)-y}{\sqrt{\nabla C^\top \hat{\Sigma}\nabla C}}\bigg),
\end{equation*} where $\Phi$ is the Gaussian CDF.

Using this observation likelihood the conditional score function for $p(x_t|C(x)>y)$ become
\begin{equation*}
\mywboxmath{
    \nabla_{x_t}\log p(x_t|E) = s_\theta(x_t,t) + \nabla_{x_t}\log \Phi\bigg(\frac{C(\hat{x}_0)-y}{\sqrt{\nabla C^\top \hat{\Sigma}\nabla C}}\bigg)}.
\end{equation*}

}

\begin{figure*}[t]
    \centering
    \begin{tabular}{ccc}
    \includegraphics[width=0.29\textwidth]{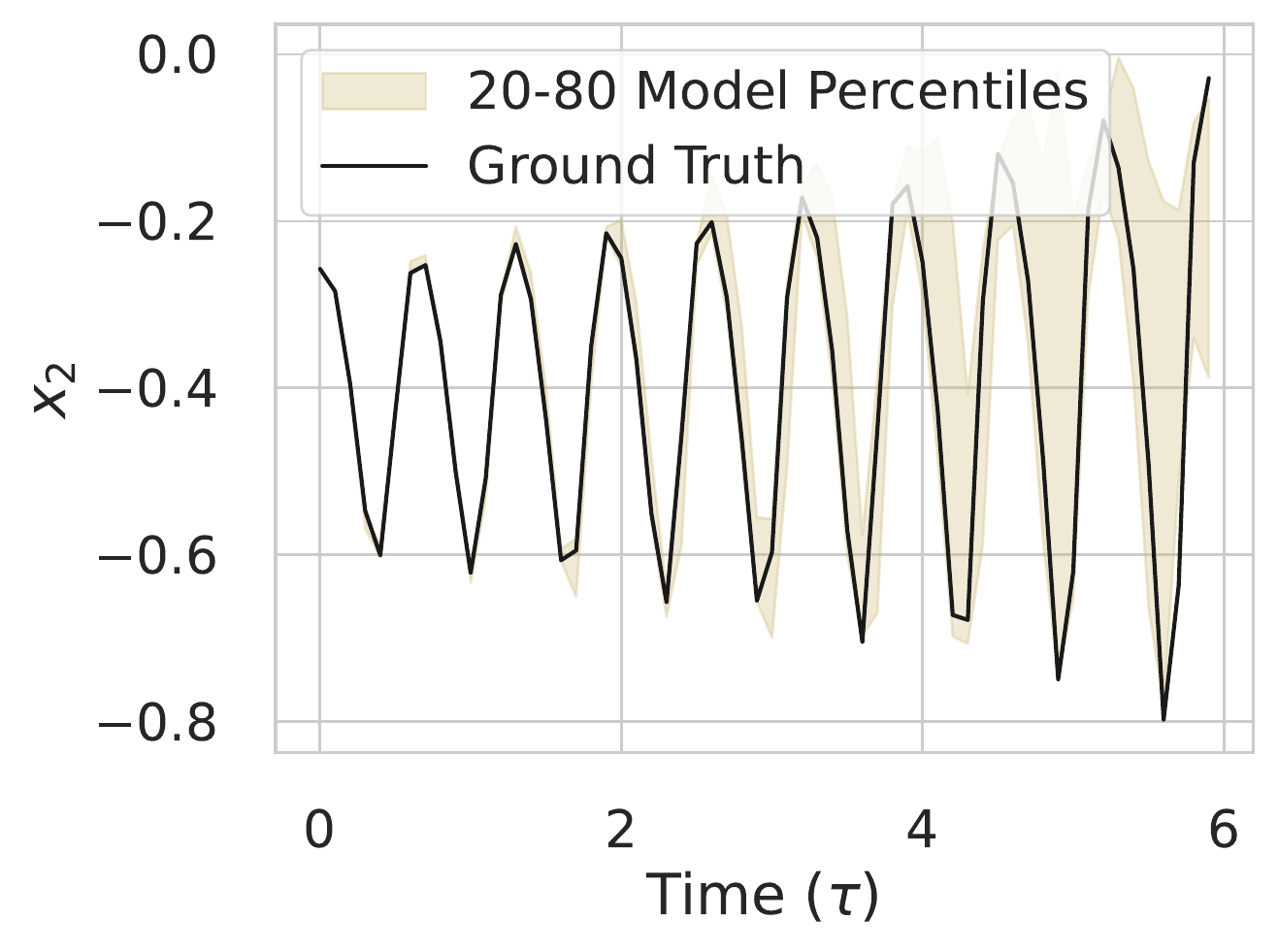} &  
    \includegraphics[width=0.29\textwidth]{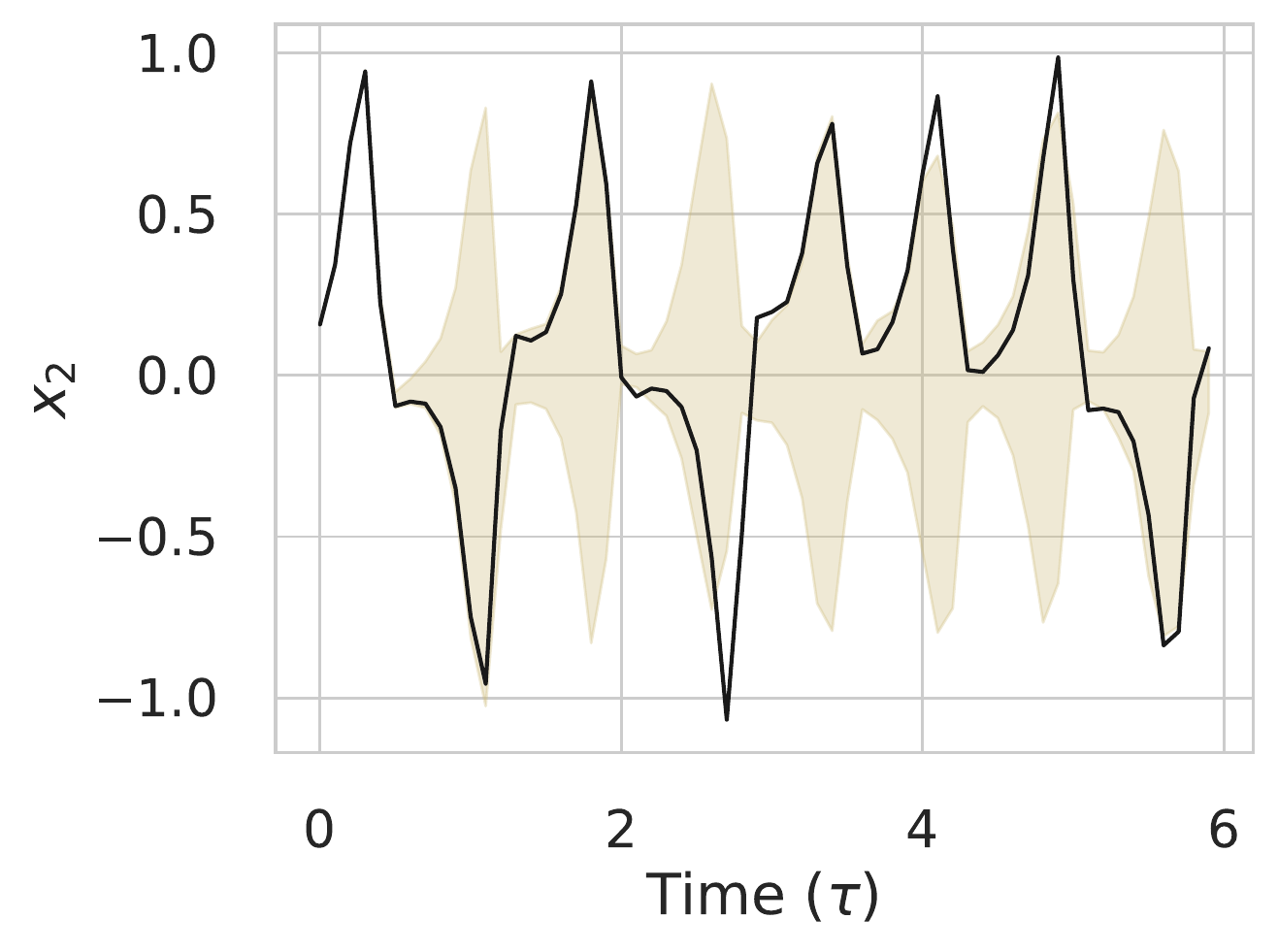} & 
    \includegraphics[width=0.37\textwidth]{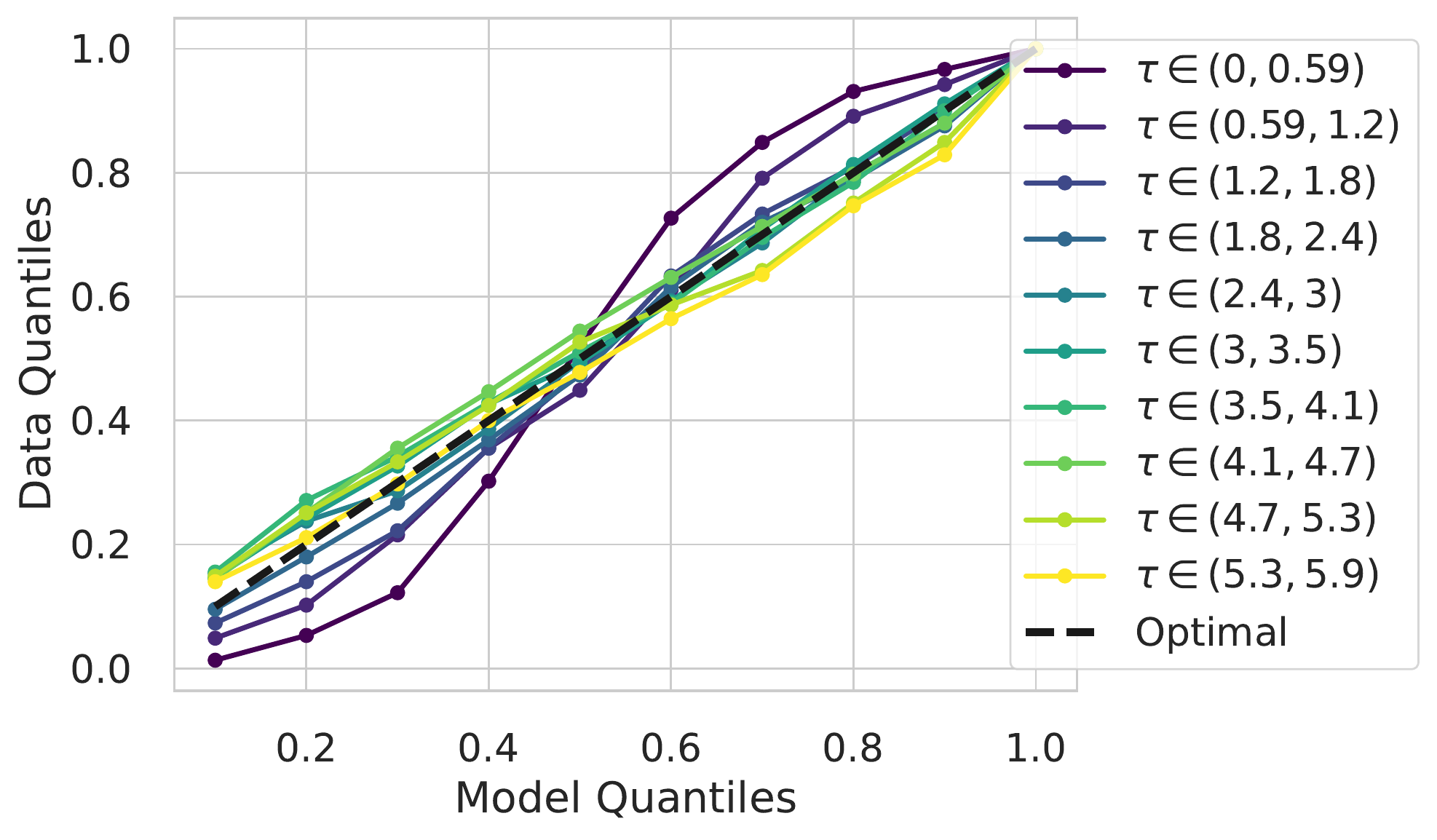} \\
    \end{tabular}
    \caption{Uncertainty quantification captured by the learned diffusion model. (\textbf{Left}) 20-80th precentiles state values for the $x_2$ component of the Lorenz attractor diffusion model, compared to the actual trajectory. As this trajectory lies in a region of the state space with a lower local Lyapunov exponent, the dynamics are less chaotic here, and the model uncertainties are relatively low. 
    (\textbf{Middle}) For samples in the very chaotic region, state transitions to the opposite arm of the attractor are common and captured by the uncertainties.
    (\textbf{Right}) Calibration of the predicted uncertainty quantiles for pointwise predictions as compared to the ground truth empirical quantiles, where optimal lies along $x=y$. While not perfect, the models are reasonably well calibrated, with bias to uncertainties being slightly to broad at early times and too narrow at late times.}
    \label{fig:uncertainty_quantification}
\end{figure*}

\subsection{Likelihoods of Events}\label{sec:event_likelihoods}

Since we can obtain samples from $p(x_0 |E)$, for a given user-defined event $E$, we can also compute the marginal likelihood of such an event.
Concretely, given a sample $x_0$, we can break down the likelihood using Bayes rule:
\begin{align}\label{eq:marginal}
\log p(E) = \log p(x_0) - \log  p(x_0 |E) + \log p(E|x_0).
\end{align}
When sampling $x_0 \sim p(x_0|E)$, the last term is zero when $x_0$ is a member of $E$. 
The two probabilities $p(x_0)$ and $p(x_0|E)$ can be computed by integrating the probability flow ODEs \citep{song2020score} 
\begin{align}\label{eq:prob_flow}
    \dot{x}_t &= f(x_t,t)-\tfrac{1}{2}g(t)^2 \nabla_{x_t} \log p(x_t),\\
    \dot{x}_t& = f(x_t,t)-\tfrac{1}{2}g(t)^2 \nabla_{x_t} \log p(x_t|E),
\end{align}
forwards in time with the continuous change of variables formula derived in FFJORD \citep{grathwohl2018ffjord}.
To reduce variance and the integration time, we use a second-order Heun integrator with a fixed time step, and compute the Jacobian log determinant directly using autograd. For more details, see Appendix~\ref{app:likelihood}.
While the above procedure is theoretically valid 
for a single sample $x_0$, we can average the estimate over multiple samples $x_0|E$ for improved accuracy.
\section{Results}

In order to evaluate the capability of diffusion models to express chaotic nonlinear dynamics, we train the models to fit the distribution of trajectories over different initial conditions for the test problems. For each system, we choose a Gaussian initial condition distribution, and integrate for a sufficient time to allow the distribution to reach equilibrium 
before recording the data. We choose $s_t$ and $\sigma_t$ according to the variance exploding process, \ie, $s_t=1$ and $\sigma_t = \sigma_\mathrm{min}\sqrt{(\sigma_\mathrm{max}/\sigma_\mathrm{min})^{2t}-1}$.
Equation parameters and collection details are specified in \autoref{app:datasets}. We train on the following three dynamical systems:

\textbf{Lorenz Attractor}
The Lorenz attractor \citep{lorenz1963deterministic} is a well-studied example of chaotic behavior, governed by a coupled three dimensional nonlinear ODE. The system contains two prominent arms of a strange attractor, and trajectories chaotically switch between the two arms.

\textbf{FitzHugh-Nagumo}
The FitzHugh-Nagumo model \citep{fitzhugh1961impulses} is a nonlinear ODE given by the coupled equations
\begin{align*}
    \tfrac{dx_i}{d\tau} &= x_i(a_i-x_i)(x_i-1) + y_i +k\sum_{j=1}^n A_{ij}(x_j-x_i), \\
    \tfrac{dy_i}{d\tau} & = b_i x_i - c_iy_i,
\end{align*}
 for $i=1,2$, leading to a set of four equations modeling the dynamics of two coupled neurons. The values for the parameters are specified in \autoref{app:datasets} and match the choice of \citet{farazmand2019extreme} which cause the system to exhibit rare but high magnitude neuron spikes, such as the examples shown in \autoref{fig:front} (right) and \autoref{fig:conditional_samples_fitzhugh} (d).

\textbf{Double Pendulum}
The double pendulum is another classic example of a chaotic system produced by the dynamics of a rigid pendulum with two point masses under the influence of gravity, but with trajectories having distinct energies unlike the previous two systems.
\begin{figure*}[t]
    \centering
    \begin{tabular}{cccc}
    \includegraphics[height=0.22\textwidth]{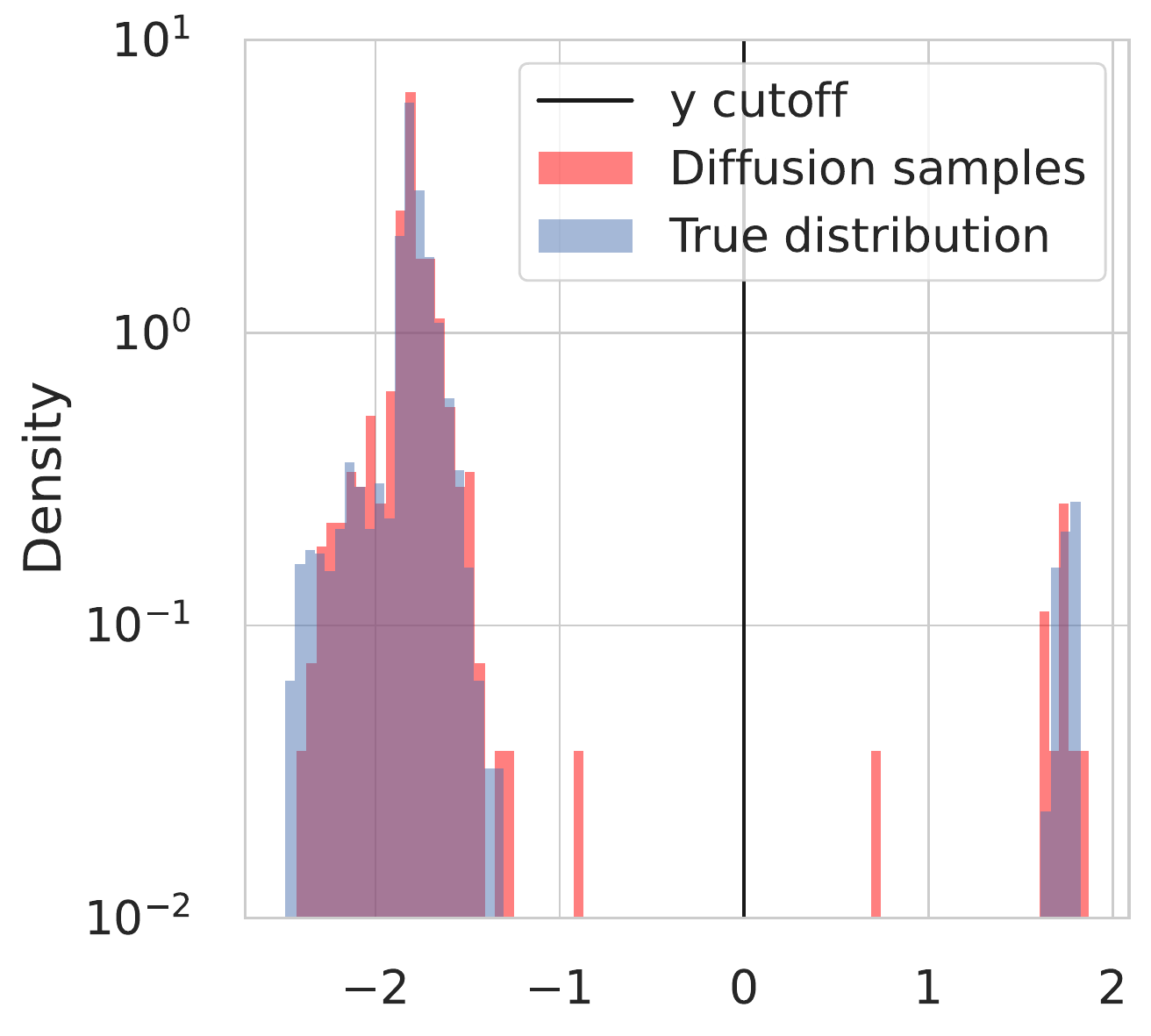} &  
    \includegraphics[height=0.22\textwidth]{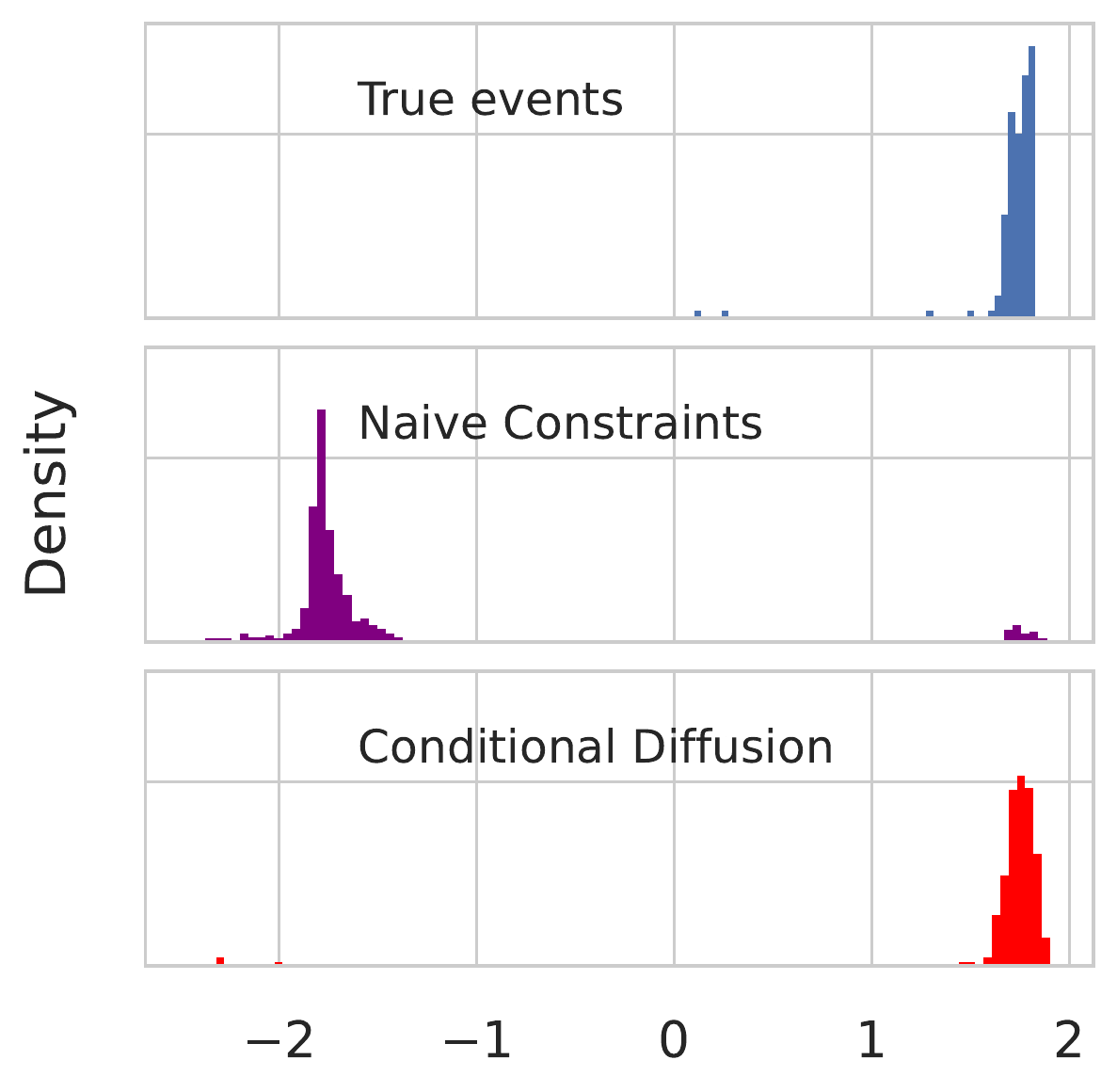} & 
    \includegraphics[height=0.225\textwidth]{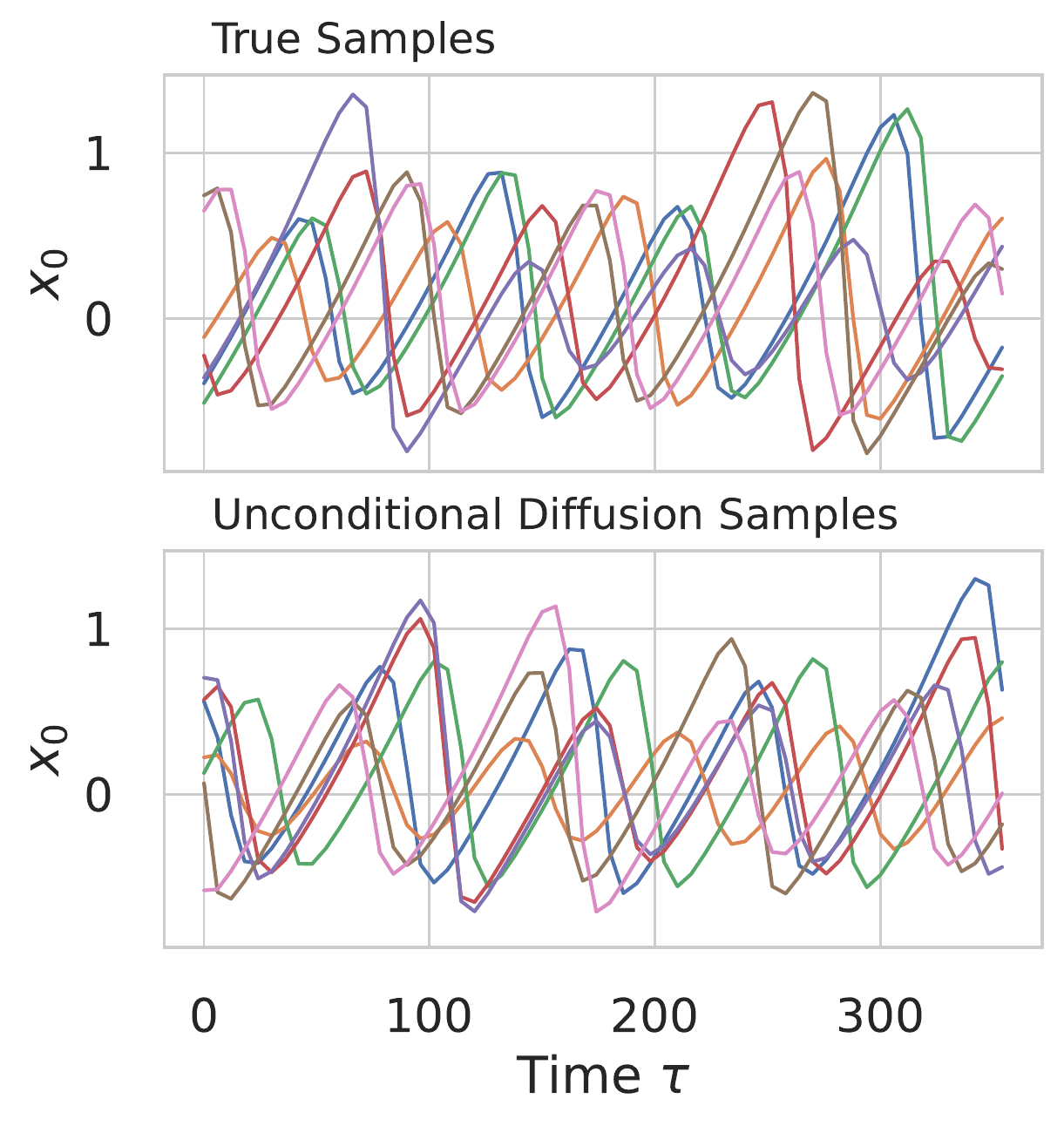} & 
    \includegraphics[height=0.225\textwidth]{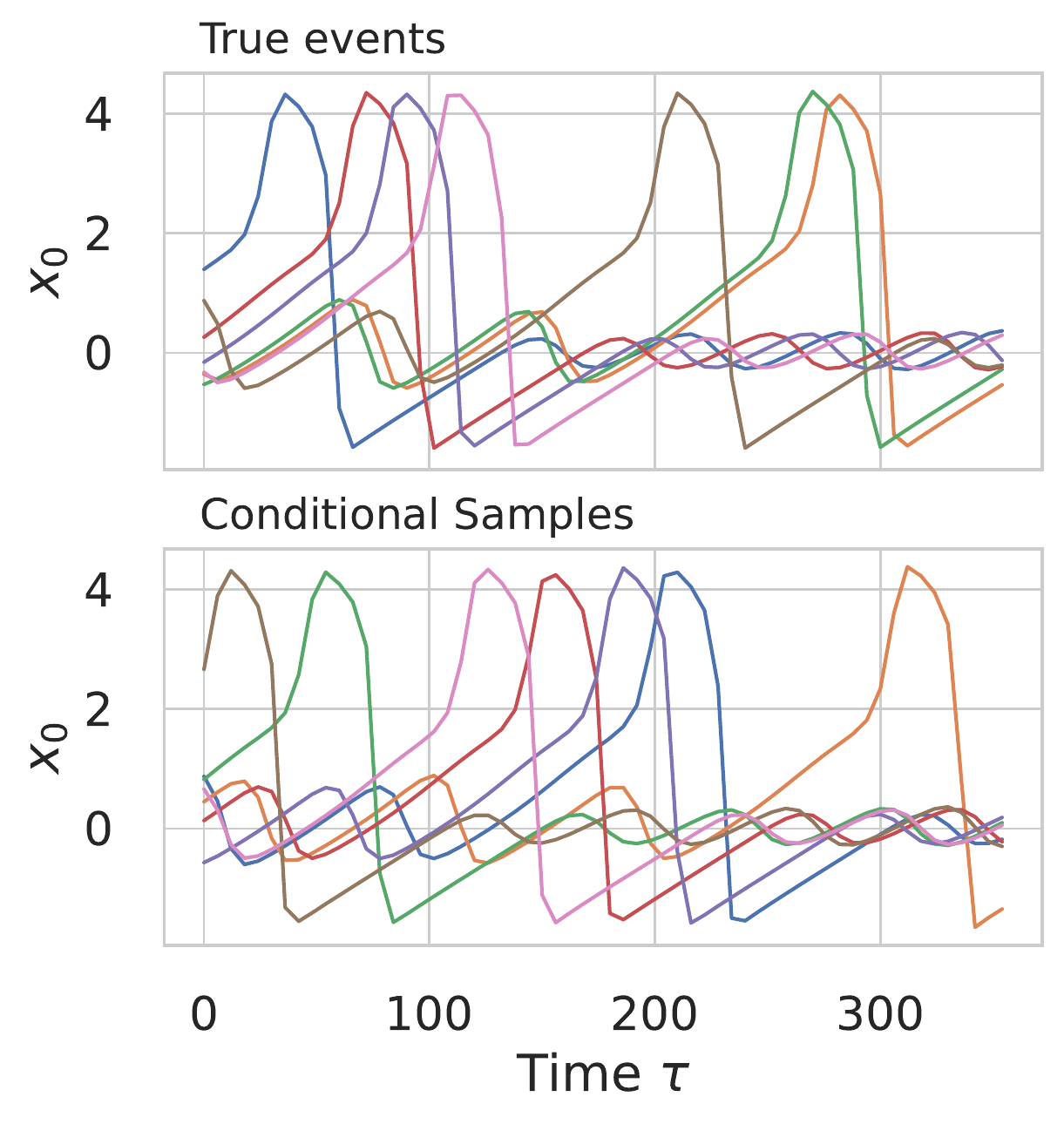}\\
    (a) $C(x)$& (b) $C(x)|E$& (c) $x \sim p(x)$ &(d) $x \sim p(x|E)$\\ 
    \end{tabular}
    \caption{(\textbf{a}) Histograms of statistic values $C(x)$ in the Fitzhugh-Nagumo system, for trajectories sampled unconditionally from from the dataset and the model. Here the event describes the unusual large scale neuron spikes that exist as a small cluster of outliers at $C(x)>1.5$. Notably the unconditional model distribution captures the outliers. (\textbf{b}) Statistic values on samples produced by our method of tail sampling vs actual extreme events vs naive approach to enforce the constraints. (\textbf{c}) Unconditional trajectory samples from both the data and model distributions.
    (\textbf{d}) Example event conditional samples compared to actual events. When conditioning on the event defined by the statistic value $E=[C(x)>0]$, the model is able to conditionally sample from the distribution, unlike for the naive approach to incorporate the constraints.
    }
    \label{fig:conditional_samples_fitzhugh}
\end{figure*}
\subsection{Unconditional Samples}

\textbf{Point Predictions}
We first evaluate the extent to which the model captures the system dynamics by measuring its ability to make accurate predictions of a trajectory given an initial condition. For each dataset, we train a conditional model that takes in the first three timesteps of a trajectory so as to make it conditional on the initial conditions (prior conditioning). Both the errors of a single sample and the pointwise median of $20$ samples are shown, which are computed against the ground truth from a held out set of initial conditions. We compare the prediction relative errors against NeuralODE models, which were trained on the same data (see \autoref{app:datasets} for details), and perturbed ground truth, in which we use a ground truth trajectory but with an initial condition perturbed by Gaussian noise with standard deviation $10^{-3}$. The last comparison specifies how the error of a small perturbation is chaotically amplified, and therefore provides a lower bound on performance. We measure relative error as $\mathrm{RelativeError}(a,b) = \|a-b\|/(\|a\|+\|b\|)$. \autoref{fig:rollout_error} shows that diffusion models capture the dynamics of the system performing similarly to NeuralODEs on pure point predictions, despite the fact that NeuralODEs make use of the ODE structure bias whereas the diffusion model does not.  

\textbf{Uncertainty Quantification}
While the model makes accurate point predictions, 
we are interested in the \emph{distribution} over the outputs captured by the model. 
In particular, we provide numerical evidence showing that the pointwise uncertainties over the state values are reasonably well calibrated, \ie, the quantiles of the model predictive distribution at a given point includes the truth the appropriate fraction of the time. 
In principle, there are two sources of uncertainty with the model when trained on these dynamical systems: uncertainty associated with the chaotic growth of error that introduces \textit{defacto} randomness, and the epistemic uncertainty of the model fit, both of which can be captured by the model. In \autoref{fig:uncertainty_quantification} (left, middle) we show the $20$-$80$th percentile of the state value of the 2nd component ($x_2$) of the Lorenz system per point as produced by our model in the shaded regions vs a ground truth trajectory, showing that the trajectory lies within the prediction interval and that it correctly captures that the state may venture to the other arm of the attractor (having larger $x_2$ value). In \autoref{fig:uncertainty_quantification} (right) we show that the model's uncertainties are calibrated quantitatively, by binning the empirical quantiles of the output samples per point and measuring the rate at which the actual values fall into those quantiles.

\subsection{Conditional Samples}
We showcase the ability to sample unlikely and extreme events with our method, by sampling extreme events in the Fitzhugh-Nagumo system described previously. Over the time-horizon considered, the neuron spiking events occurs in only in roughly $1/30$ of the  trajectories, which are qualitatively very different from the average trajectories. These unusual trajectories are shown in \autoref{fig:front} (right) and \autoref{fig:conditional_samples_fitzhugh} (d).
We define an event through the quantity
$C(x) = \max_\tau (x_1(\tau)+x_2(\tau))/2-2.5$, $E = [C(x)>0]$. The statistics of $C$ for the ground truth unconditional and conditional distributions are shown in \autoref{fig:conditional_samples_fitzhugh} (a) and (c) respectively. Note the unusual neuron fires out past value $4$, which are also produced occasionally when sampling the trained model. When conditioning on this nonlinear inequality constraint using our method, we can sample directly from this cluster of outliers producing event samples (\autoref{fig:conditional_samples_fitzhugh} b) that mirror true events, and match the statistic values (\autoref{fig:conditional_samples_fitzhugh} c). We compare our method (Conditional Diffusion in the figure) to a more naive approach (Naive Constraints) of using $p(E|x_t) \approx \Phi(\tfrac{s_t}{\sigma_t}(C(\hat{x}_0)-y))$ which doesn't make use of the 2nd order Tweedies formula information of the conditional covariance. Unlike our method, this approach is not as effective at sampling from the tail.

Next, we perform the same tail sampling for the Lorenz system. We separate trajectories in a region of the state space where switching between the two arms of the attractor is common vs where it is not. For this purpose we define the nonlinear statistic $C(x) = 0.6-\|F[x-\bar{x}]\|_1$ where $F$ is the Fourier transform applied to the trajectory time $\tau$ and $\|\|_1$ is the 1-norm taken over both the Fourier components and the $3$ dimensions of the state, and $\bar{x}$ is just the average value of $x$ over $\tau$. As shown in \autoref{fig:conditional_samples_lorenz}, this statistic separates the distribution into two populations. We condition on this inequality constraint $C(x)>0$ and we generate samples satisfying the inequality constraint and that do not change arms, as shown in \autoref{fig:conditional_samples_lorenz}. With both systems, the conditional samples preserve the diversity in the distribution, rather than collapsing to a conditional mode as one would have with optimization based methods.

\begin{figure*}[t]
    \centering
    \begin{tabular}{cccc}
    \includegraphics[height=0.22\textwidth]{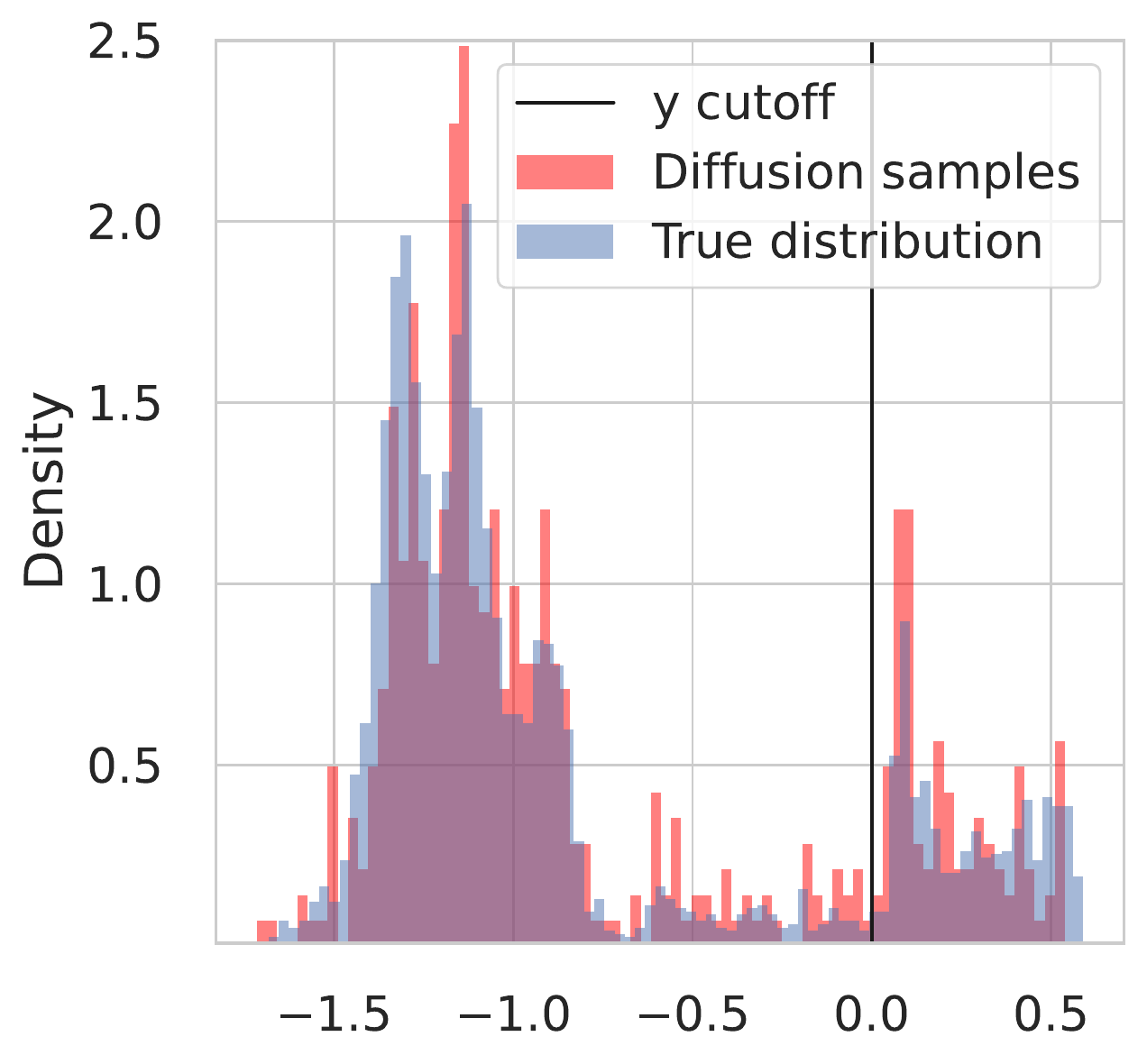} & 
    \includegraphics[height=0.22\textwidth]{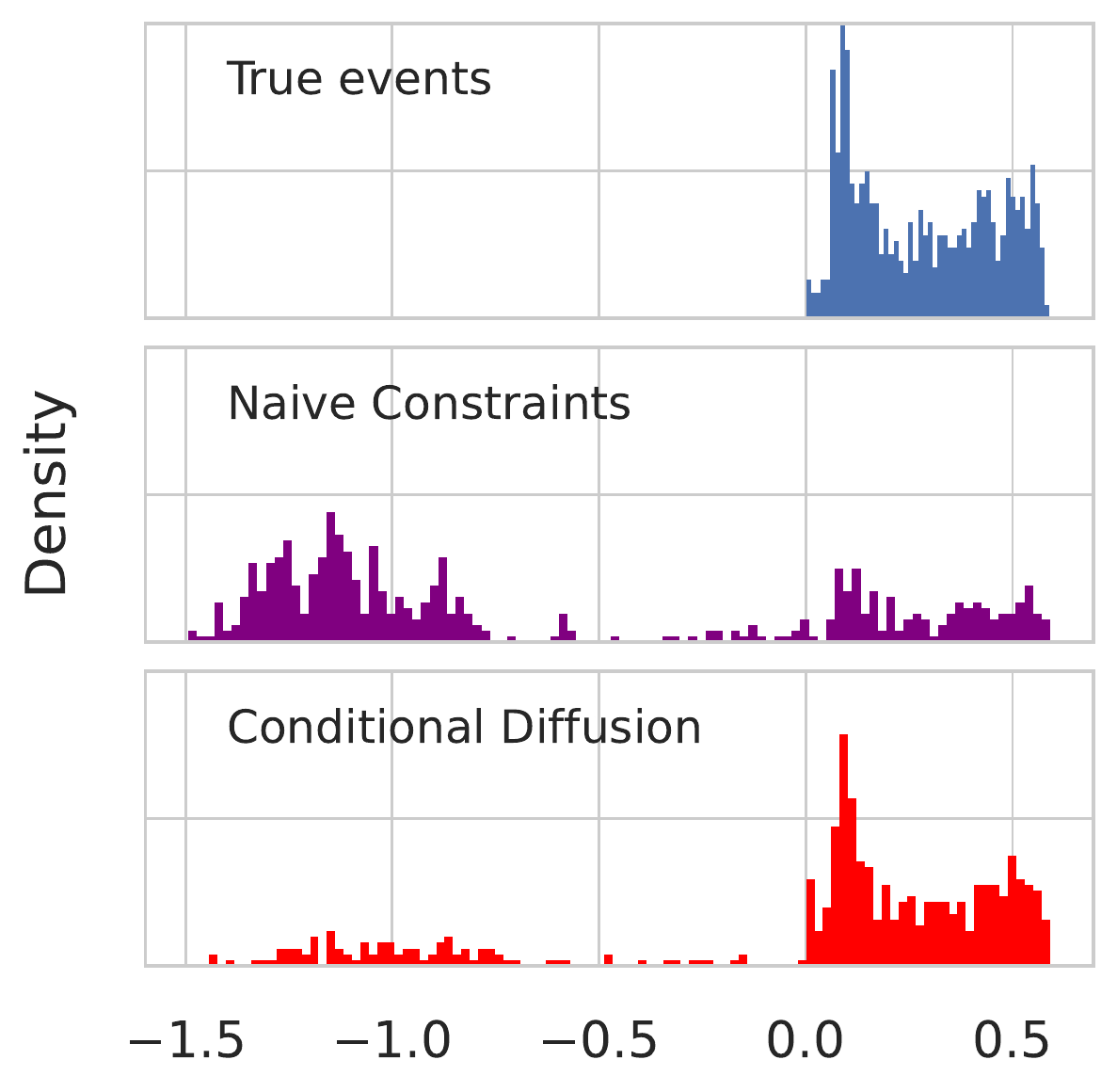} &
    \includegraphics[height=0.225\textwidth]{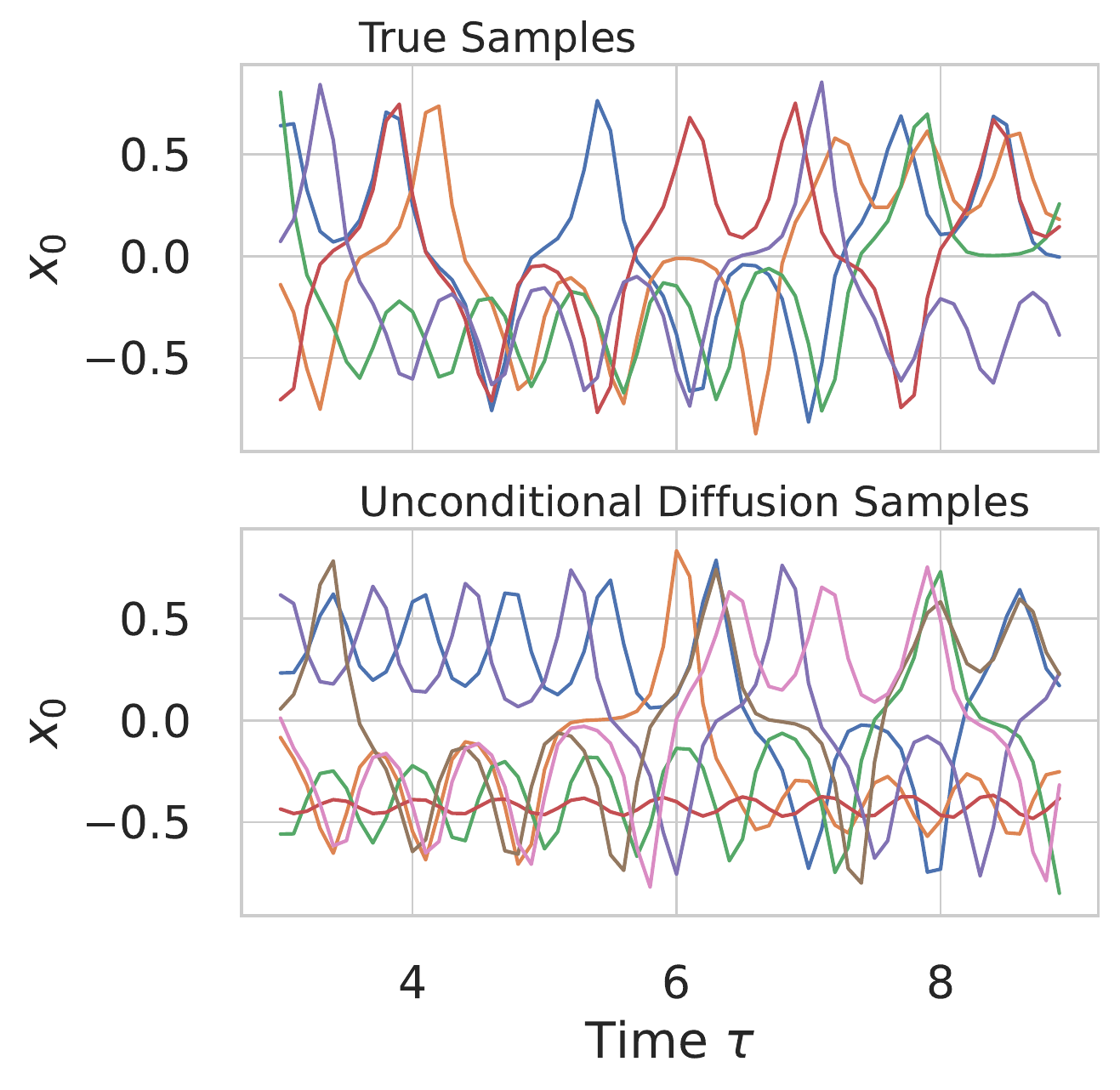} &
    \includegraphics[height=0.225\textwidth]{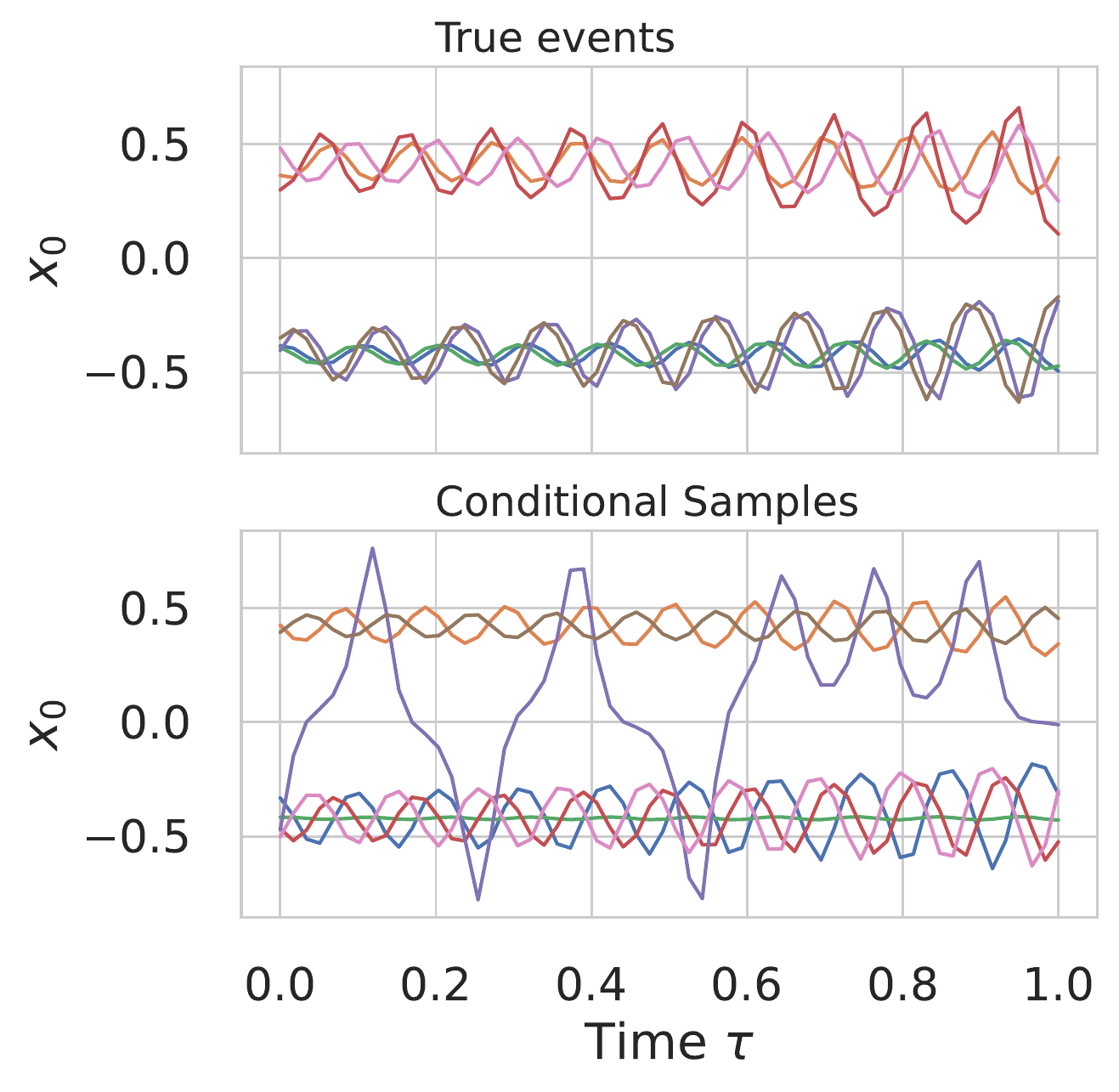} \\
    (a) $C(x)$& (b) $C(x)|E$& (c) $x \sim p(x)$ & (d) $x \sim p(x|E)$\\ 
    \end{tabular}
    \caption{(\textbf{a}) Histograms of statistic values $C(x)$ in the Lorenz system, for trajectories sampled unconditionally from from the dataset and the model. The event describes trajectories which do not cross between the two arms of the strange attractor (in the limited time horizon). Notably the unconditional model distribution captures the outliers. (\textbf{b}) Statistic values on samples produced by our method of tail sampling vs actual extreme events vs naive approach to enforce the constraints. (\textbf{c}) Unconditional trajectories from the data and model distribution.
    (\textbf{d}) Example event conditional samples compared to actual events.  When conditioning on the event defined by the statistic value $E=[C(x)>0]$, the model is able to conditionally sample from the distribution.  Note that while the majority of conditional samples satisfy the event, there are a few stragglers that do not satisfy the event shown by the population the left of $C(x)=0$ in (b) and the purple trajectory in (d).
    }
    \label{fig:conditional_samples_lorenz}
\end{figure*}

\subsection{Convergence of the Gaussian Approximation}\label{subsec:gaussian_approx}

\begin{figure}[t]
    \centering
    \vspace{-2mm}
    \includegraphics[width=\linewidth]{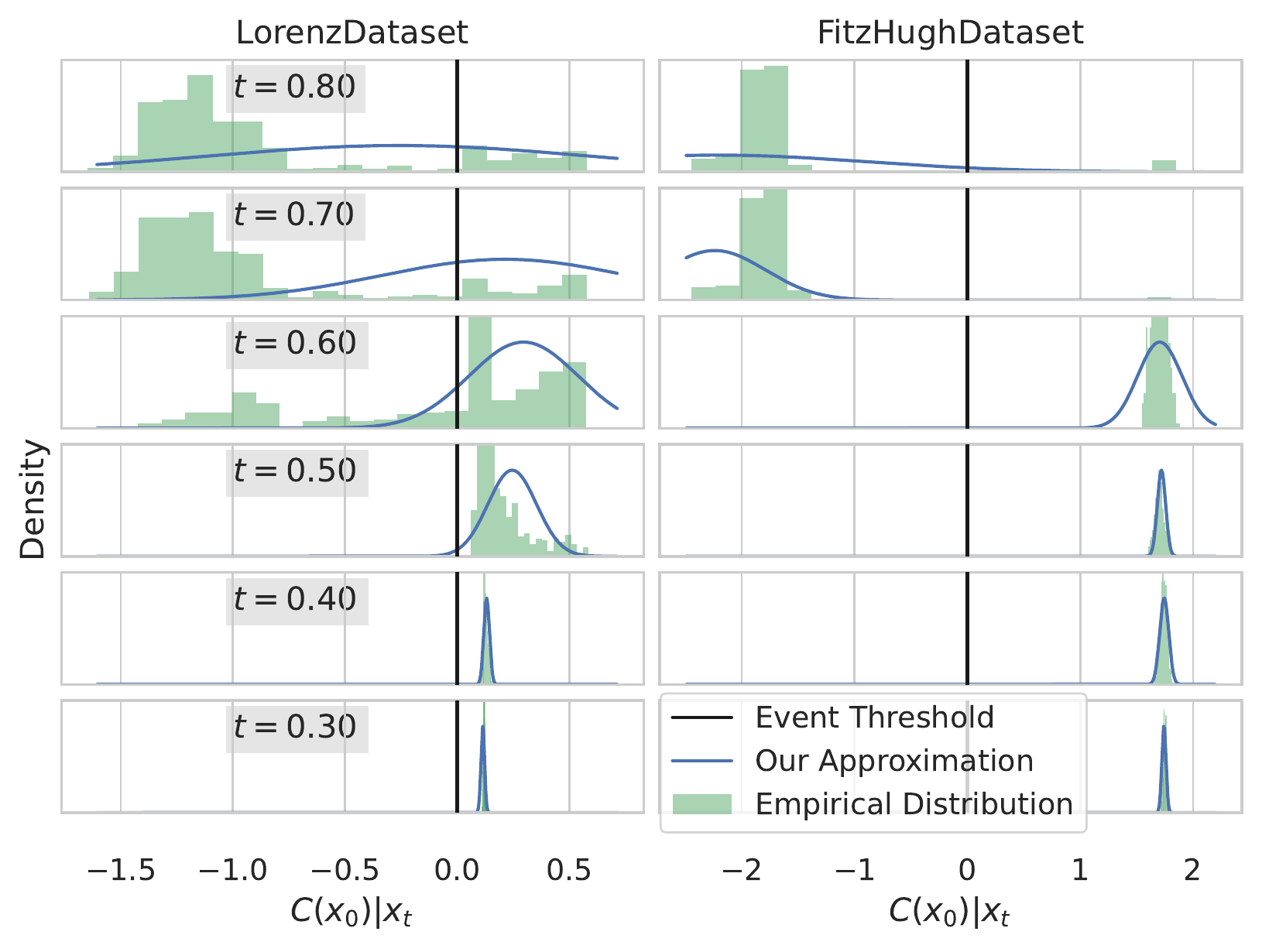}
    \vspace{-1.5mm}
    \caption{The empirical distribution of the event statistic $C(x_0)$ computed over distinct sample paths given the value $x_t$ at a particular noise level for the events chosen in the two datasets. The linearized moment matching approximation loosely guides the generation at high noise levels, and becomes increasingly accurate as the noise level decreases. 
    }
    \label{fig:gaussian_approx}
\end{figure}
Here, we study the accuracy of our approximation $p(C(x_0)|x_t) \approx \mathcal{N}(C(\hat{x}_0),\nabla C^\top \hat{\Sigma}\nabla C)$,
which combines Gaussian moment matching on $x_0|x_t$ with a linearization of the constraints around the mode of the Gaussian. We show that our approximation becomes increasingly exact as the time variable and hence the noise goes to $0$, and the asymptotics of the error with $\sigma_t$ are quantified in \autoref{app:cumulants}.
\begin{theorem}
Suppose $C$ is an analytic function and $\{p(x_t)\}_{t \in [0, 1]}$ is collection of smooth density functions with a smooth dependence on $t$ (associated to $x_t$ in \eqref{eq:sde}), then the random variable $(\nabla C^\top \hat{\Sigma} \nabla C)^{-1/2}\big(C(x_0)-C(\hat{x}_0)\big)$ (conditioned on $x_t$) converges in distribution to a multivariate normal as $t \rightarrow 0$,
\begin{equation}
     (\nabla C^\top \hat{\Sigma} \nabla C)^{-1/2}(C(x_0)-C(\hat{x}_0)) \rightarrow_d \mathcal{N}(0,I).
\end{equation}
\end{theorem}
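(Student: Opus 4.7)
The plan is to decompose the proof into two steps. Step (i): show that the centered and rescaled posterior $\hat\Sigma^{-1/2}(x_0-\hat x_0)\mid x_t$ converges in distribution to $\mathcal N(0,I_{md})$ as $t\to 0$. Step (ii): transfer this convergence to $C(x_0)$ via a first-order Taylor expansion of the analytic map $C$ and a Slutsky-type argument.

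For Step (i), I would start from Bayes' rule (specializing to the variance-exploding case $s_t=1$ for clarity)
\begin{equation*}
p(x_0\mid x_t)\;\propto\;\exp\!\Big(-\tfrac{\|x_0-x_t\|^2}{2\sigma_t^2}\Big)\,p(x_0),
\end{equation*}
and change variables $y=\hat\Sigma^{-1/2}(x_0-\hat x_0)$. Tweedie's identities give $\hat x_0 - x_t = \sigma_t^2\nabla\log p(x_t) = O(\sigma_t^2)$ and $\hat\Sigma/\sigma_t^2 = I + \sigma_t^2\nabla^2\log p(x_t) \to I$, so both the shift from $x_t$ to $\hat x_0$ and the covariance correction are subleading in $\sigma_t$. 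A Laplace-type expansion of the log-posterior around $\hat x_0$ using smoothness of $\log p$ then yields
\begin{equation*}
\log p(y\mid x_t) = -\tfrac12\|y\|^2 + O(\sigma_t)\quad\text{uniformly on compact sets.}
\end{equation*}
Equivalently, higher-order Tweedie identities (which express the $k$-th cumulant of $x_0\mid x_t$ in terms of higher derivatives of $\log p(x_t)$; these are worked out in the cited appendix on cumulants) show that the $k$-th cumulant scales like $\sigma_t^{2k}$ for $k\geq 3$, while $\hat\Sigma\sim\sigma_t^2$. Rescaling by $\hat\Sigma^{-1/2}$ therefore kills every cumulant of order $\geq 3$ in the limit, and the method of cumulants gives convergence of the rescaled posterior to $\mathcal N(0,I_{md})$.

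For Step (ii), analyticity of $C$ provides the expansion
\begin{equation*}
C(x_0)-C(\hat x_0) = \nabla C(\hat x_0)^\top (x_0-\hat x_0) + R(x_0-\hat x_0),\qquad \|R(u)\|=O(\|u\|^2).
\end{equation*}
Substituting $x_0-\hat x_0 = \hat\Sigma^{1/2} y$ and premultiplying by $(\nabla C^\top\hat\Sigma\,\nabla C)^{-1/2}$, the linear part becomes the matrix $M := (\nabla C^\top\hat\Sigma\,\nabla C)^{-1/2}\nabla C^\top\hat\Sigma^{1/2}$ applied to $y$; a direct computation shows $M M^\top = I_n$, so $M$ is a partial isometry pushing $\mathcal N(0,I_{md})$ forward to $\mathcal N(0,I_n)$. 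The normalized remainder satisfies $\|(\nabla C^\top\hat\Sigma\,\nabla C)^{-1/2} R(\hat\Sigma^{1/2} y)\| = O(\sigma_t^{-1})\cdot O(\sigma_t^2\|y\|^2) = O(\sigma_t\|y\|^2)$, which converges to zero in probability since $\|y\|$ is tight. Slutsky's theorem then concludes.

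The main obstacle is making Step (i) fully rigorous: pointwise convergence of $\log p(y\mid x_t)$ on compacts must be promoted to weak convergence, which in turn requires uniform tail control on $p(x_0\mid x_t)$. This follows from integrability and smoothness assumptions on $p(x_0)$ together with a standard Laplace-method tail bound, but it is the technical heart of the argument. A secondary issue is well-definedness of the normalization: $(\nabla C^\top\hat\Sigma\,\nabla C)^{-1/2}$ exists only when $\nabla C(\hat x_0)$ has full row rank, a non-degeneracy hypothesis that is implicit in the statement and should be spelled out.
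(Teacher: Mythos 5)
Your proposal is correct, and its first half coincides with the paper's argument: the paper also establishes $\hat{\Sigma}^{-1/2}(x_0-\hat{x}_0)\mid x_t \rightarrow_d \mathcal{N}(0,I)$ via exactly the cumulant route you sketch --- higher-order Tweedie identities give $k_n(x_0\mid x_t)=\tfrac{\sigma_t^{2n}}{s_t^n}\nabla^{\otimes n}\log p(x_t)$ for $n\ge 3$, so after rescaling by $\hat{\Sigma}^{-1/2}$ the $n$-th cumulant decays like $\sigma_t^n$, and convergence of cumulants (Janson, 1988) gives the normal limit. Where you genuinely diverge is the transfer step. The paper keeps the quadratic and higher Taylor terms of $C$, defines $v=\tfrac{s_t}{\sigma_t}\big(C(x_0)-C(\hat{x}_0)\big)$, computes \emph{its} cumulants (linear-part cumulants plus $O(\sigma_t/s_t)$ corrections from the cross terms), and invokes the cumulant convergence theorem a second time to get $v\rightarrow_d\mathcal{N}(0,\nabla C^\top\nabla C)$. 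You instead use a delta-method/Slutsky argument: only the first-order Taylor term survives, the normalized remainder is $O(\sigma_t\|y\|^2)=o_p(1)$ by tightness of $y$, and the linear part passes through the partial isometry $M_t=(\nabla C^\top\hat{\Sigma}\nabla C)^{-1/2}\nabla C^\top\hat{\Sigma}^{1/2}$ with $M_tM_t^\top=I_n$ and $M_t\to(\nabla C^\top\nabla C)^{-1/2}\nabla C^\top$. Your route is arguably cleaner, since the paper's bookkeeping of the $O(\sigma_t/s_t)$ corrections to $k_n(v)$ is the least rigorous part of its proof, whereas your version needs only the already-established limit for $y$ plus standard weak-convergence tools; the paper's route, in exchange, yields the explicit $\sigma_t^n$ and $O(\sigma_t/s_t)$ decay rates of the cumulants that it uses to comment on how much slower the linearized approximation converges than the pure moment-matching one. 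Your observation that $(\nabla C^\top\hat{\Sigma}\nabla C)^{-1/2}$ requires $\nabla C(\hat{x}_0)$ to have full row rank is a legitimate hypothesis the paper leaves implicit (it is equally needed for the paper's final Slutsky step from $v$ back to the normalized statistic).
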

\textbf{Proof:} See \autoref{app:cumulants}. 

To better understand the transition from high- to low-noise regimes in terms of the convergence of the approximation, we empirically evaluate the distribution $p(C(x_0)|x_t)$ at different points and at various times between $t=1$ and $t=0$ for a trajectory conditioned on the event $C(x_0)>0$. We compare this empirical distribution from sampling with our linearized moment matching approximation in \autoref{fig:gaussian_approx}. For high noise levels ($t>0.5$), the approximation only loosely guides $x_t$ towards satisfying the constraint, however for low noise levels ($t < 0.5$) the approximation becomes increasingly accurate despite the highly nonlinear function $C$. When the noise level is high, fine details of the event gradients $\nabla_{x_t} \log p(E|x_t)$ are less important since they are drowned out by the noise; however, when these fine details matter later on in the generation the constraint approximation becomes increasingly accurate.

\subsection{Computing Marginal Likelihoods of Events}
Our method enables computation of the marginal probability $p(E)$ of the event according to the model, taking into account the different ways the event can happen. In \autoref{app:likelihood_estimation} we investigate estimating these likelihoods when applied to predict whether or not a given initial condition will produce an extreme event $C(x)>0$ in the time window for the neuron firing in the Fitzhugh-Nagumo system. We compare computing the event likelihood directly sampling from the model, using the conditional likelihoods described in \autoref{sec:event_likelihoods}, as well as with importance sampling. For the methods directly using the conditional likelihoods, the difference in likelihoods $\log p(x_0)-\log p(x_0|E)$ is small enough that the two quantities need to be computed extremely precisely, and small errors can produce large errors in the estimated marginal probability $p(E)$. On the other hand, the model has captured the true event likelihood well as evidenced by direct sampling.
\section{Conclusion}
In this work, we successfully build diffusion models for the probabilistic modeling of trajectories of chaotic dynamical systems which are able to capture the dynamics of these systems, and more importantly, provide calibrated uncertainty estimates.
We have developed a probabilistic approximation with theoretical guarantees that enables conditioning the model on nonlinear equality and inequality constraints without retraining the model. With the approach, we are able to sample directly from the tails of the distribution. We discuss limitations of our method in \autoref{app:limitations}.

While in this work we considered ODEs, the applicability of diffusion models extends far beyond that, and we envision a future where a handful of extremely large diffusion models are employed for spatiotemporal weather prediction. We hope that inference time conditioning capabilities will enable querying the model in different ways, such as predicting and anticipating extreme temperatures or adverse events. As the capabilities of these models grow, and the distributions they model become more multifaceted and diverse, exhaustive direct sampling becomes infeasible and retraining to specialize for a given type of conditioning is prohibitive both in terms of compute and data. Additionally, we hope that marginal likelihood computations can be refined in future work, which can be extremely valuable for estimation of extreme events.

\subsection*{Acknowledgements} 
We would like to thank Zhong Yi Wan for insightful discussions during the internship.

Marc Finzi and Andrew Gordon Wilson were partially supported by NSF CAREER IIS-2145492, NSF CDS\&E-MSS 2134216, NSF HDR-2118310, 
NSF I-DISRE 193471, NIH R01DA048764-01A1, NSF IIS-1910266, NSF 1922658 NRT-HDR,
Meta Core Data Science, Google AI Research, BigHat Biosciences, Capital One, and an
Amazon Research Award.

\bibliography{references.bib}
\bibliographystyle{icml2023}

\newpage

\appendix
\section{Tweedie's Covariance}\label{app:tweedies}
Consider the noise relation 
$x = z + \sigma \epsilon$ where $\epsilon \sim \mathcal{N}(0,I)$. 
We write the Gaussian $x|z$ in exponential family form:
\begin{equation}
    p(x|z) = \exp{\big[z^\top T(x)-A(z)\big]}h(x)
\end{equation}
 where $h(x) = e^{-x^\top x/2\sigma^2}/{(2\pi \sigma^2)}^{d/2}$, $A(z) = z^\top z/2\sigma^2$, and $T(x)=x/\sigma^2$ is the sufficient statistic.

Using Bayes rule, $p(z|x) = p(x|z)p(z)/p(x)$, we can rewrite $p(z|x)$ and also express it in exponential family form:
\begin{align}
    p(z|x) & = \exp{\big(z^\top T(x)-A(z)\big)}h(x)p(z)/p(x)\\
    p(z|x) & =\exp{\big(x^\top z/\sigma^2-\log\tfrac{p(x)}{h(x)}\big)}[p(z)e^{-A(z)}] \nonumber \\ 
   & = \exp{\big(x^\top \bar{T}(z)-\bar{A}(x)\big)}\bar{h}(z),
\end{align}
where $\bar{h}(z) = p(z)e^{-A(z)}$, and $\bar{A}(x) = \log\tfrac{p(x)}{h(x)}$, and $\bar{T}(z) = z/\sigma^2$. Despite the fact that the distribution $p(z)$ is not known, $p(z|x)$ is guaranteed to be in exponential family.

A convenient fact is that for exponential families, $\bar{A}(x)$ is the cumulant generating function for $\bar{T}(z)$, and derivatives produce the cumulants:
\begin{align}
    \mathbb{E}[\bar{T}(z)|x] &= \nabla \bar{A}(x) = \nabla \log p(x) - \nabla \log h(x) \\
    \mathrm{Cov}[\bar{T}(z)|x] &=\nabla^2 \bar{A}(x)= \nabla^2 \log p(x) - \nabla^2 \log h(x)
\end{align}
and so forth for higher order cumulants (here $\nabla^2$ denotes the hessian). 

Plugging in $h(x) = e^{-x^\top x/2\sigma^2}/{(2\pi \sigma^2)}^{d/2}$ and $\bar{T}(z) = z/\sigma^2$, and moving the $\sigma^2$ to the other side, we get
\begin{align}
    \mathbb{E}[z|x] &=\sigma^2(\nabla \log p(x) + x/\sigma^2)  \\
    \mathrm{Cov}[z|x] &= \sigma^4 (\nabla^2 \log p(x) + I/\sigma^2)
\end{align}

Finally, with the noise relation in the diffusion models $x_t = s_tx_0+\sigma_t\epsilon$ we can substitute in $z\rightarrow s_tx_0$ and $x\rightarrow x_t$ to get
\begin{align}
    \mathbb{E}[x_0|x_t] &=(x_t+ \sigma^2_t\nabla \log p(x_t))/s_t \\
    \mathrm{Cov}[x_0|x_t] &= \tfrac{\sigma_t^2}{s_t^2}(I+\sigma^2_t \nabla^2 \log p(x_t)),
\end{align}
which proves the relation that we use in the main text.

\section{Convergence of the Moment Matching and Linearization Approximations}\label{app:cumulants}
In this section we show that the linearized moment-matching approximation becomes exact in the limit as the noise scale approaches $0$. First consider the moment matching approximation by itself.
\begin{theorem}
Suppose that  $\{p(x_t)\}_{t \in [0, 1]}$ is a family of smooth probability density functions that depends smoothly on $t$ and  where $x_t$ is given by \eqref{eq:sde}. Then the variable $\hat{\Sigma}^{-1/2}(x_0-\hat{x}_0)$ (conditioned on $x_t$) converges in distribution to a standard multivariate Gaussian, in the limit as $t \rightarrow 0$ (or equivalently as $\sigma_t \rightarrow 0$):
\begin{equation}
    \hat{\Sigma}^{-1/2}(x_0-\hat{x}_0) \rightarrow_d \mathcal{N}(0,I),
\end{equation}
where $\hat{x}_0 = \mathbb{E}[x_0|x_t]$ and $\hat{\Sigma}=\mathrm{Cov}[x_0|x_t]$.
\end{theorem}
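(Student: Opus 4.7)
The plan is to prove the theorem via the method of cumulants, leveraging the exponential-family representation derived in Appendix A. First I would recall that, conditioned on $x_t$, the sufficient statistic $\bar{T}(z)=z/\sigma_t^2$ (with $z=s_t x_0$) has cumulant generating function $\bar{A}(x_t)=\log p(x_t)+\|x_t\|^2/(2\sigma_t^2)+\text{const}$. Differentiating $\bar{A}$ in $x_t$ yields all cumulants of $z\mid x_t$, and rescaling by $s_t$ gives all cumulants of $x_0\mid x_t$.

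The key observation is that the quadratic contribution $\|x_t\|^2/(2\sigma_t^2)$ affects only the first two derivatives of $\bar{A}$. Hence, for every order $k\ge 3$, the $k$-th cumulant tensor of $x_0\mid x_t$ equals $(\sigma_t^{2k}/s_t^k)\,\nabla^k\log p(x_t)$ (up to index book-keeping for multivariate cumulants), which under the smoothness assumption on $p$ is $O(\sigma_t^{2k})$ locally in $x_t$. Meanwhile, the conditional covariance $\hat{\Sigma}=(\sigma_t^2/s_t^2)(I+\sigma_t^2\nabla^2\log p(x_t))$ satisfies $\|\hat{\Sigma}\|=\Theta(\sigma_t^2)$ for small $\sigma_t$, so $\hat{\Sigma}^{-k/2}$ contributes a factor $O(\sigma_t^{-k})$ to the $k$-th cumulant of the standardized variable $Y=\hat{\Sigma}^{-1/2}(x_0-\hat{x}_0)$. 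Combining these, the $k$-th cumulant of $Y$ is $O(\sigma_t^{k})$ and vanishes as $\sigma_t\to 0$ for all $k\ge 3$, while by construction the mean is zero and the covariance is the identity. Cumulant convergence, applied through a Taylor expansion of $\log\mathbb{E}[e^{iu\cdot Y}\mid x_t]$ and Lévy's continuity theorem, then gives $Y\to_d\mathcal{N}(0,I)$.

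The main obstacle is making this cumulant-to-CLT step rigorous, since the cumulant series is only formal and one must control the tails of the characteristic function. The cleanest route I would take is a direct Laplace-type expansion of the posterior: changing variables $x_0=\hat{x}_0+\hat{\Sigma}^{1/2}u$ gives $p(u\mid x_t)\propto p(\hat{x}_0+\hat{\Sigma}^{1/2}u)\exp\!\bigl(-\|x_t-s_t(\hat{x}_0+\hat{\Sigma}^{1/2}u)\|^2/(2\sigma_t^2)\bigr)$. Expanding the Gaussian exponent, absorbing the first two orders in $u$ into the reference measure, and using smoothness of $\log p$ to bound the remaining cubic and higher terms by $O(\sigma_t\|u\|^3)$ on any compact set, one obtains uniform convergence of the density of $u$ to the standard Gaussian density, with Gaussian-tail control on the remainder. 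This both yields the stated convergence in distribution and, as a byproduct, an $O(\sigma_t)$ rate that matches the cumulant bookkeeping and supports the companion statement about $C(x_0)$ in Section~\ref{subsec:gaussian_approx} via the delta method applied through the smooth map $C$.
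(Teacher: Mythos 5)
Your cumulant computation is exactly the paper's argument: the paper likewise differentiates the exponential-family cumulant generating function $\bar{A}$ from Appendix~\ref{app:tweedies} to get $k_n(x_0|x_t)=\tfrac{\sigma_t^{2n}}{s_t^n}\nabla^{\otimes n}\log p(x_t)$ for $n\ge 3$, and then observes that the standardization by $\hat{\Sigma}^{-1/2}$ leaves the $n$-th cumulant of order $\sigma_t^n$ while the mean and covariance are pinned to $0$ and $I$. Where you diverge is in how the vanishing of higher cumulants is converted into convergence in distribution. The paper discharges this step by citing the normal-convergence-by-cumulants criterion of \citet{janson1988normal}; your worry that ``the cumulant series is only formal'' is legitimate in general, but for a Gaussian limit it is resolved by exactly that kind of result (the Gaussian is moment-determinate, so the method of moments/cumulants applies), and no characteristic-function tail control is needed. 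Your proposed alternative closing step --- the change of variables $x_0=\hat{x}_0+\hat{\Sigma}^{1/2}u$ followed by a Laplace-type expansion of the posterior density --- is a genuinely different and in some ways stronger route: it would give locally uniform convergence of densities and hence convergence in total variation via Scheff\'e's lemma, not just convergence in distribution, and it makes the $O(\sigma_t)$ rate explicit at the density level. The price is that ``on any compact set'' is not enough: you must also show that the posterior mass outside large compact sets vanishes as $\sigma_t\to 0$, which requires some global control on $p$ (e.g.\ integrability or growth bounds on $\log p$) beyond the smoothness the theorem assumes; the cumulant route avoids this because the cited criterion only needs the cumulants themselves. Either way the core estimates you derive are correct and match the paper's.
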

\begin{proof}
To determine the convergence of $u:=\hat{\Sigma}^{-1/2}(x_0-\hat{x}_0)$, it is sufficient to show that the expectation converges to $0$, the covariance converges to $I$, and the higher order moments converge to $0$ \citep{janson1988normal}.

To start, we can derive the higher order \emph{cumulants} of the conditional distribution $x_0|x_t$ by taking additional derivatives of the cumulant generating function $\bar{A}$.

Applying the same substitution as in \autoref{app:tweedies}, we obtain the $n$-th order cumulant for $n\ge 3$ given by
\begin{equation}
    k_n(x_0|x_t) := \frac{\sigma_t^{2n}}{s_t^n} \nabla^{\otimes n} \log p(x_t),
\end{equation}
where $\nabla^{\otimes n} = \underbrace{\nabla \otimes \dots \otimes \nabla}_{n}$ and $\otimes$ is the tensor product. 
Now, consider the cumulants of the random variable $u:=\hat{\Sigma}^{-1/2}(x_0-\hat{x}_0)$, which has $0$ mean and a constant scale.
\begin{align}
    k_1(u) &= \mathbb{E}[u] = 0\\
    k_2(u) &=\mathrm{Cov}[u] = \hat{\Sigma}^{-1/2}\hat{\Sigma}\hat{\Sigma}^{-1/2} = I\\
    k_n(u) &= \sigma_t^n (\tfrac{\sigma_t}{s_t}\hat{\Sigma}^{-1/2}\nabla)^{\otimes n} \log p(x_t) \quad \mathrm{for} \ {n\ge3}
\end{align}
Investigating the limiting behavior of $k_n(u)$, we can make use of two important facts: firstly
\begin{equation*}
    \lim_{t \rightarrow 0}\tfrac{\sigma_t}{s_t}\hat{\Sigma}^{-1/2} = \big(\lim_{\sigma_t \rightarrow 0} I+\sigma_t^2\nabla^2\log p(x_t) \big)^{-1/2} = I
\end{equation*}
and secondly $\lim_{t\rightarrow 0} \nabla^{\otimes n} \log p(x_t) = \nabla^{\otimes n} \log p(x_0)$ since $\log p(x_0)$ is smooth.

Therefore we see that $\lim_{t\rightarrow 0} k_n(u) = \nabla^{\otimes n} \log p(x_0) (\lim_{t\rightarrow 0} \sigma_t^n) = 0$, the higher order cumulants converge to $0$ at a rate of $\sigma_t^n$, whereas the mean and variance are fixed.

Therefore according to \citep{janson1988normal}, $u$ converges in distribution to a multivariate normal $u \rightarrow_d \mathcal{N}(0,I)$. 
\end{proof}

Next we consider the full approximation, including the linearization of the constraint as described in \autoref{subsec:nonlinear}.
The constraint is linearized from the Taylor expansion
\begin{align*}
    C(x_0) &= C(\hat{x}_0) \\ &+ \nabla C^\top (x_0-\hat{x}_0) + (x_0-\hat{x}_0)^\top \nabla^2 C(x_0-\hat{x}_0)/2,
\end{align*}
(with additional higher order terms omitted for brevity).
For simplicity we have used notation for scalar function $C$, but the result below holds for vector $C$ analogously. Now we state the convergence result for our approximation
\begin{equation}
    p(C(x_0)|x_t) \approx \mathcal{N}\big(C(\hat{x}_0),\nabla C^\top \hat{\Sigma} \nabla C\big).
\end{equation}

\begin{theorem}
Suppose that $C$ is analytic and $p(x_t)$ is a smooth function then as $\sigma_t \rightarrow 0$, the random variable $(\nabla C^\top \hat{\Sigma} \nabla C)^{-1/2}\big(C(x_0)-C(\hat{x}_0)\big)$ converges in distribution to a multivariate normal:
\begin{equation}
     (\nabla C^\top \hat{\Sigma} \nabla C)^{-1/2}(C(x_0)-C(\hat{x}_0)) \rightarrow_d \mathcal{N}(0,I).
\end{equation}
\end{theorem}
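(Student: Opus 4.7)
The plan is to reduce to the preceding theorem on convergence of $\hat{\Sigma}^{-1/2}(x_0-\hat{x}_0)$ to $\mathcal{N}(0,I)$, and then propagate that limit through a Taylor expansion of $C$ around $\hat{x}_0$. Throughout, everything is conditioned on $x_t$, and the limit is taken as $t\to 0$, equivalently as $\sigma_t\to 0$ with $s_t$ bounded below (as in standard diffusion schedules).

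First, I would expand
\begin{equation*}
C(x_0)-C(\hat{x}_0)=\nabla C^\top(x_0-\hat{x}_0)+R(x_0,\hat{x}_0),
\end{equation*}
where $\nabla C$ is evaluated at $\hat{x}_0$ and the remainder satisfies $\|R(x_0,\hat{x}_0)\|\le M\|x_0-\hat{x}_0\|^2$ on a neighborhood of $\hat{x}_0$, using analyticity of $C$ and boundedness of its second derivative on a compact set around the limiting point $x_0$. Dividing through by $(\nabla C^\top\hat{\Sigma}\nabla C)^{1/2}$ splits the statistic into a linear part and a remainder part, each of which I will handle separately.

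For the linear part, define the (possibly rectangular) matrix $A_t:=(\nabla C^\top\hat{\Sigma}\nabla C)^{-1/2}\nabla C^\top\hat{\Sigma}^{1/2}$. A direct computation gives $A_tA_t^\top=I$, so $A_t$ is a partial isometry for every $t$, and the linear part equals $A_t\hat{\Sigma}^{-1/2}(x_0-\hat{x}_0)$. The previous theorem (applied under the same hypotheses, since $p(x_t)$ is smooth in both arguments) yields $\hat{\Sigma}^{-1/2}(x_0-\hat{x}_0)\to_d\mathcal{N}(0,I)$. Since $\tfrac{s_t^2}{\sigma_t^2}\hat{\Sigma}\to I$ as $\sigma_t\to 0$, one checks that $A_t$ converges (deterministically, conditional on a realization of $x_t\to x_0$) to a limit $A_0$ with $A_0A_0^\top=I$. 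Slutsky's theorem then gives $A_t\hat{\Sigma}^{-1/2}(x_0-\hat{x}_0)\to_d A_0\,\mathcal{N}(0,I)=\mathcal{N}(0,I)$.

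For the remainder, the bound $\|x_0-\hat{x}_0\|=O_p(\sigma_t/s_t)$ follows from $\mathrm{Cov}[x_0|x_t]=\hat{\Sigma}=O(\sigma_t^2/s_t^2)$ in operator norm (here I would invoke Markov's inequality, since the covariance is explicit). Hence $\|R\|=O_p(\sigma_t^2/s_t^2)$. On the other hand, $(\nabla C^\top\hat{\Sigma}\nabla C)^{-1/2}$ has operator norm $O(s_t/\sigma_t)$ (again using $\tfrac{s_t^2}{\sigma_t^2}\hat{\Sigma}\to I$ together with $\nabla C$ at the limit being fixed and of full rank). Multiplying, the normalized remainder is $O_p(\sigma_t/s_t)\to 0$ in probability. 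A final application of Slutsky's theorem combines the linear term with this vanishing remainder to give the claimed convergence.

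The main obstacle I anticipate is the rank/invertibility issue: the statement implicitly presumes $\nabla C^\top\hat{\Sigma}\nabla C$ is invertible in the relevant limit, which requires $\nabla C$ at $x_0$ to have full row rank. Without this, $A_t$ may fail to converge to a partial isometry of the right shape, and one would need to restrict to the range of $\nabla C^\top$ or replace $(\cdot)^{-1/2}$ by a pseudoinverse. A secondary technical nuisance is making the Taylor remainder uniform as $x_t$ itself varies; this is handled by restricting to a compact neighborhood of the limiting point $x_0$ and using the analyticity of $C$ to get uniform bounds on the second derivative there.
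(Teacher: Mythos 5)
Your proof is correct, but it takes a genuinely different route from the paper's. You run a delta-method argument: split $C(x_0)-C(\hat{x}_0)$ into a linear term plus a Taylor remainder, handle the linear term by writing it as $A_t\,\hat{\Sigma}^{-1/2}(x_0-\hat{x}_0)$ with $A_t=(\nabla C^\top\hat{\Sigma}\nabla C)^{-1/2}\nabla C^\top\hat{\Sigma}^{1/2}$ a partial isometry converging to $A_0=(\nabla C^\top\nabla C)^{-1/2}\nabla C^\top$, invoke the preceding moment-matching theorem as a black box together with Slutsky, and kill the remainder by the $O_p(\sigma_t/s_t)$ scaling. The paper instead rescales to $v=\tfrac{s_t}{\sigma_t}(C(x_0)-C(\hat{x}_0))$, expands $C$ to second order in $u=\hat{\Sigma}^{-1/2}(x_0-\hat{x}_0)$, computes all cumulants of $v$ explicitly via the higher-order Tweedie identities ($k_1(v)=O(\sigma_t/s_t)$, $k_2(v)\to\nabla C^\top\nabla C$, $k_n(v)\to 0$ for $n\ge 3$), and concludes by the cumulant-convergence criterion of Janson. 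Your argument is more elementary and modular, and it usefully surfaces the implicit hypothesis that $\nabla C$ must have full row rank for the normalization to make sense (the paper never states this). What the cumulant route buys is quantitative information: it exhibits the $O(\sigma_t/s_t)$ decay of the higher cumulants, which the paper uses to contrast the slower convergence of the linearized approximation with the $O(\sigma_t^n)$ decay of the pure moment-matching case. One point to tighten in your write-up: the uniform bound $\|R\|\le M\|x_0-\hat{x}_0\|^2$ holds only on a compact neighborhood while $x_0\,|\,x_t$ has unbounded support, so the $O_p$ bookkeeping needs an explicit truncation to a high-probability event; you flag this at the end, and the paper's own proof is no more rigorous on the analogous step (it also treats the conditioning point $x_t$, and hence $\hat{x}_0$ and $\nabla C(\hat{x}_0)$, as converging to a fixed realization without formalizing in what sense).
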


\begin{proof}
To start, we note that $\lim_{\sigma_t \rightarrow 0}\tfrac{s_t^2}{\sigma_t^2}\nabla C^\top \hat{\Sigma}\nabla C  = \nabla C^\top \nabla C$, so if we can prove that for the random variable
    $v = \tfrac{s_t}{\sigma_t} \big(C(x_0)-C(\hat{x}_0)\big)$ converges to $\mathcal{N}(0, \nabla C^\top \nabla C)$ then we have proven the claim.
    
For convenience, define $A = (s_t/\sigma_t)\hat{\Sigma}^{1/2}$, keeping in mind that $\lim_{\sigma_t \rightarrow 0} A = I$
    
Recalling the random variable $u=\hat{\Sigma}^{-1/2}(x_0-\hat{x}_0)$, we can rewrite $v$ using the Taylor series as
\begin{equation}
    v= \nabla C^\top Au + \tfrac{1}{2}\big(\tfrac{\sigma_t}{s_t}\big) u^\top A^\top \nabla^2 CAu + O(\big(\tfrac{\sigma_t}{s_t}\big)^2).
\end{equation}

Notably, $A$, $u$, $u^\top \nabla^2 Cu$, and higher order terms converge to a fixed scale as $\sigma \rightarrow 0$ (since $C$ is assumed to be twice continuously differentiable 
and $u$ converges to a normal).

Writing out the cumulants of this random variable we see a similar pattern as before:
\begin{align*}
    k_1(v) &= \mathbb{E}[v] = O(\big(\tfrac{\sigma_t}{s_t}\big))\\
    k_2(v) &=\mathrm{Cov}[v] = \nabla C^\top A\mathrm{Cov}[u] A^\top\nabla C+ O(\big(\tfrac{\sigma_t}{s_t}\big))\\
    k_n(v) &= \sigma_t^n (\nabla C^\top A\nabla)^{\otimes n} \log p(x_t) + O(\big(\tfrac{\sigma_t}{s_t}\big))\quad \mathrm{for} \ {n\ge3}.
\end{align*}

Note that $\lim_{\sigma_t \rightarrow 0} (\nabla C^\top A\nabla)^{\otimes n} \log p(x_t) = (\nabla C^\top \nabla)^{\otimes n} \log p(x_0)$ since $p$ is smooth. In the limit as $t \rightarrow 0$, the cumulants become

\begin{align*}
    \lim_{t \rightarrow 0} k_1(v) &= 0\\
    \lim_{t \rightarrow 0} k_2(v) &= \nabla C^\top \nabla C\\
    \lim_{t \rightarrow 0} k_n(v) &= 0 \quad \mathrm{for} \ {n\ge3}.
\end{align*}

Therefore, according to \citep{janson1988normal}, $v$ converges in distribution to $\mathcal{N}(0,\nabla C^\top \nabla C)$.
\end{proof}

In contrast with the mere moment matching Gaussian approximation the convergence rate is considerably slower however, with higher order cumulants only decaying as $O(\big(\tfrac{\sigma_t}{s_t}\big))$ and depending on the smoothness of $C$. Nevertheless, as the noise scale gets smaller, the combined linearization and moment matching approximation (informally stated)
\begin{equation}
    p(C(x_0)|x_t) \underset{t\rightarrow 0}{\rightarrow} \mathcal{N}\big(C(\hat{x}_0),\nabla C^\top \hat{\Sigma} \nabla C\big)
\end{equation}
becomes exact.

\section{Marginal Likelihood Estimation}\label{app:likelihood_estimation}

In estimating the marginal event likelihood $p(E)$, there are multiple ways of extracting this quantity from the model. The simplest, and least scalable to extremely low likelihood events is extensively sample from the model and compute the fraction which satisfy the event $p(E) = \mathbb{E}_{x\sim p(x)}[\mathds{1}{[x\in E]}]$. Second is to use the method we introduce in \autoref{eq:marginal}, and compute the average increase in the log likelihood 
\begin{equation}\label{eq:marginal_estimator}
    p(E) = \exp{(\mathbb{E}_{x \sim p(x|E)}[\log p(x)-\log p(x|E)])}.
\end{equation}

 \begin{figure}[h!]
    \centering
    \vspace{-4mm}
    \includegraphics[width=0.3\textwidth]{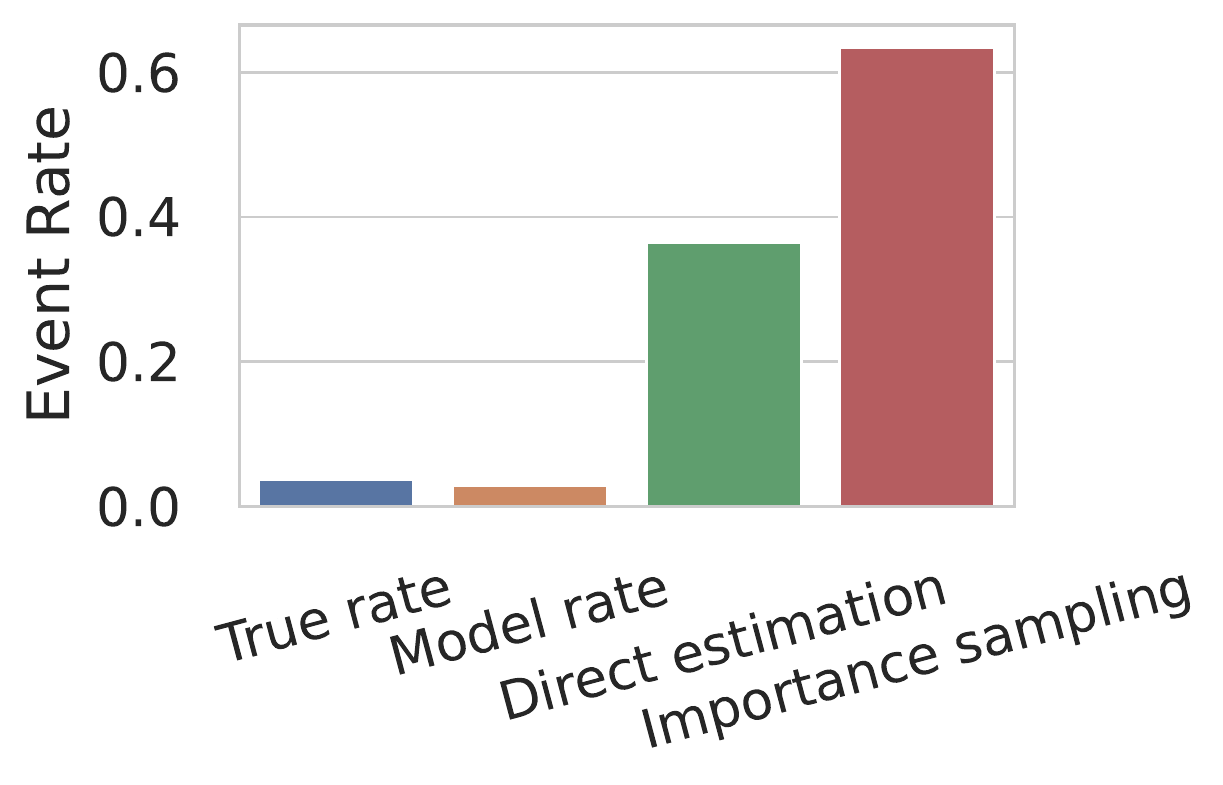}
    \vspace{-1.5mm}
    \caption{Calibration of different estimators of the marginal event likelihood for Fitzhugh-Nagumo extreme event. True rate is computed over sampling from the original dataset and model rate is computed over samples from the diffusion model. Direct estimation uses \autoref{eq:marginal_estimator} and importance sampling uses the importance sampled version of the estimator, both are computed using $3$ samples from $x\sim p(x|E)$. Likelihood based estimators are not well calibrated to the true rate of ~1/30, but the model rate is.}
    \label{fig:event_probabilities}
\end{figure}

 The potential downside of this approach is that its validity depends on the quality of the approximation used to compute $p(x|E)$. Alternatively, we can instead drop this requirement and view $p(x|E)=q(x)$ merely as a strong proposal distribution for importance sampling. Sampling from $q$, $p(E) = \int \mathds{1}{[x\in E]}p(x)dx = \mathbb{E}_{x\sim p(x|E)}[p(x)/p(x|E)]$ giving a very closely related estimator to \autoref{eq:marginal_estimator}, but that has no requirements on $p(x|E)$ other than it covers the event space. In \autoref{fig:event_probabilities}, we evaluate the predictions of these three estimators on the FitzHugh-Nagumo neuron spiking event, and compare to the ground truth event rate.
 
 While the sampling based method $p(E) = \mathbb{E}_{x\sim p(x)}[\mathds{1}{[x\in E]}]$ (Model rate in the figure) well approximates the true model event rate, unfortunately neither of the likelihood based methods (Direct estimation and Importance sampling) for estimate $p(E)$ appear to be calibrated when estimated over a small number of samples $p(x_0|E)$, even though theoretically they should produce the value consistent with the model distribution. We suspect this has to do with the accumulation of numerical errors in the estimation of the Jacobian log determinants for $p(x|E)$ which needs to be estimated very precisely because $\log p(x|E)$ is typically on the order of $1000$ and should differ from $\log p(x)$ only by a few tenths of a percent.

\section{Probabilistic Origin of Constraint Projection}\label{app:linear_constraints_projection}
In this section we investigate how the additional constraint projection steps used in \citet{chung2022come,chung2022improving} can arise in our probabilistic framework when applied to linear constraints, but performing one additional approximation. Consider the goal of imposing the set of linear constraints $Cx = y$ onto samples from the diffusion model for a given constraint matrix $C \in \mathbb{R}^{r\times d}$ and $y\in \mathbb{R}^r$. We seek to sample $x \sim p(x|Cx=y)$ using a diffusion models.

Without loss of generality, we can orthogonalize the linear constraints. Decomposing $C$ with the SVD into the right nullspace of $C$ and its complement:
\begin{equation}
    C = \begin{bmatrix}U & V \end{bmatrix}\begin{bmatrix} \Sigma & 0 \\ 0 & 0\\\end{bmatrix}\begin{bmatrix}Q^\top \\P^\top \end{bmatrix}.
\end{equation}
Here the matrix $Q \in \mathbb{R}^{d\times r}$ and $P \in \mathbb{R}^{d \times d-r}$ correspond to the subspace of $x$ that is determined by the constraint, and the orthogonal complement which is linearly independent of the constraint.
We can now rewrite the constraint $Cx=y$ as $U\Sigma Q^\top x=y$ or equivalently: $Q^\top x=u$ where $u = \Sigma^{-1}U^\top y$.
 
We can decompose $x$ in these two components: its projection onto the row space of $C$ and onto its orthogonal complement, $x = Qu + Pv$, where $v := P^\top x$. In contrast with the derivation in \autoref{sMethod}, we will split up the diffusion process into these two subspaces. 
In order to sample conditionally, we need the conditional scores
\begin{equation}
    \nabla_{x_t} \log p(x_t|C x_0=y) = \nabla_{x_t} \log p(x_t|Q^\top x_0=u_0).
\end{equation}
We can express the gradients with respect to $x_t$ as the sum of the projected gradients with respect to $u_t$ and with respect to $v_t$ using the chain rule: $\nabla_{x_t} = Q\nabla_{u_t} +P\nabla_{v_t}$.
To see this, let $O=[Q, P] \in \mathbb{R}^{d\times d}$ be the concatenation of $Q$ and $P$ that forms a full rank orthogonal matrix ($O^\top O=I$). Let 
\begin{equation}
z_t: = \left [ \begin{array}{c}
     u_t  \\
     v_t
\end{array} \right] =  O^\top x_t,
\end{equation}
which can be inverted to get $Oz_t = x_t$. Applying the chain rule, it follows that
\begin{equation}
    \nabla_{x_t} = O\nabla_{z_t} = Q\nabla_{u_t} +P\nabla_{v_t}.
\end{equation}

Therefore, we can split up the conditional scores into gradients with respect to the two variables $u_t$ and $v_t$:
\begin{align}
    \nabla_{x_t} \log p(x_t|u_0) &= Q\nabla_{u_t} \log p(x_t|u_0) \\
                                 &+ P\nabla_{v_t} \log p(x_t|u_0).
\end{align}

So far, this equation merely expresses \autoref{sMethod} in a different form. 
The additional approximation needed to produce the explicit constraint projection is to replace or approximate 
$\nabla_{u_t} \log p(x_t|u_0)$ with $\nabla_{u_t} \log p(u_t|u_0)$. Despite being  closely related they are different:
$\nabla_{u_t} \log p(x_t|u_0) = \nabla_{u_t} \log p(u_t|u_0) + \nabla_{u_t} \log p(v_t|u_t,u_0)$. If we use this approximation, then this first term can be easily computed from the forward noising process
$u_t \sim \mathcal{N}(s_tu_0,\sigma_t^2 I)$ and therefore 
\begin{equation}
    \nabla_{u_t} \log p(u_t|u_0) = \frac{s_tu_0-u_t}{\sigma_t^2}.
\end{equation}

For the second term, one may verify from Bayes rule that
\begin{align}
    p(x_t|u_0) &= p(v_t|u_t)p(u_0|x_t)p(u_t|u_0)/p(u_0|u_t) \\
    \nabla_{v_t} \log p(x_t|u_0) &=  \nabla_{v_t} \log p(v_t|u_t) + \nabla_{v_t} \log p(u_0|x_t),
\end{align}
as all but these two terms do not depend on $v_t$.
The first term can be identified as simply the scores of the unconditional network projected onto the unknown subspace:
\begin{equation}
    \nabla_{v_t} \log p(v_t|u_t) = \nabla_{v_t} \log p(v_t,u_t) = P^\top s_\theta(x_t,t).
\end{equation}

Performing moment matching on $p(u_0|x_t)$ like before, we can approximate the distribution with a multivariate Gaussian which shares its true mean and covariance:
\begin{align}
    p(u_0|x_t) &\approx \mathcal{N}(Q^\top\hat{x}_0,Q^\top \hat{\Sigma}Q). 
\end{align}
With the combination of the two approximations, the conditional scores become:
\begin{align}
    \nabla_{x_t} \log p(x_t|u_0) &= Q\big[s_tu_0-u_t\big]/\sigma_t^2 + P\nabla_{v_t}\log p(x_t|u_0)
\end{align}
where
\begin{align}
    \nabla_{v_t}\log p(x_t|u_0) &\approx P^\top s_\theta(x_t,t) \nonumber \\ &+\nabla_{v_t}\log \mathcal{N}(u_0; Q^\top\hat{x}_0,Q^\top \hat{\Sigma}Q)
\end{align}

Substituting $\hat{\Sigma} = 2\alpha I$ to match \citet{chung2022improving} (which neglects the scaling with $\sigma_t/s_t$), and applying the chain rule to convert derivatives with respect to $v_t$ to derivatives with respect to $x_t$,
\begin{equation}
    \nabla_{v_t}\log \mathcal{N}(u_0; Q^\top\hat{x}_0,2\alpha I) = -\alpha P^\top\nabla_{x_t}\|Q^\top \hat{x}_0 -u_0\|^2.
\end{equation}

Finally, assembling the terms, we have
\begin{align}
    \bar{s}(x_t,t) &= Q\big[s_tu_0-u_t\big]/\sigma_t^2  \nonumber \\
    & + PP^\top[s(x_t,t)-\alpha \nabla_{x_t}\|Q^\top \hat{x}_0 -u_0\|^2],
\end{align}
where $\bar{s}(x_t,t)$ denotes the approximation for $\nabla_{x_t} \log p(x_t|u_0)$ when using these additional assumptions. The three terms can be understood as: (1) the analytically known diffusion in the known subspace, (2) the projection of the unconditional score function to the unknown subspace, and (3) the Gaussian correction term projected into the unknown subspace.

We can identify each of the terms in this equation directly with equations 7 and 8 in \citet{chung2022improving} where the terms are renamed as follows. Their variable names correspond to the following quantities:
$y \leftarrow u_0 = R^{-T}y$, $P\leftarrow Q^\top$, and $A=I-P^\top P \leftarrow I-QQ^\top = PP^\top$. While here we have the results expressed in terms of the score function rather than a denoising step, the two are consistent. If framed as a denoising step, $Q\big[s_tu_0-u_t\big]/\sigma_t^2$ would become the $b$ term: $b \leftarrow Q\epsilon$ where $\epsilon \sim \mathcal{N}(s_tu_0,\sigma_t^2I)$. The only difference is the $W$ matrix which is not fully specified in their method (they only provide a couple examples where it is suggested what it should be).

Therefore we see that while the constraint projections of \citet{chung2022improving} is not \emph{equivalent} to our derivation in \autoref{sMethod} for the linear case with identity covariance, it naturally arises when making the additional replacement
$\nabla_{u_t} \log p(x_t|u_0) \mapsto \nabla_{u_t} \log p(u_t|u_0)$. Intuitively speaking, this additional assumption assumes that we can evolve the noised version of the known values $u_t$ without considering the unknown values.

\section{Conditional Score Convergence in Continuous Time} \label{app:convergence}
 \begin{figure}[h!]
    \centering
    \includegraphics[width=0.4\textwidth]{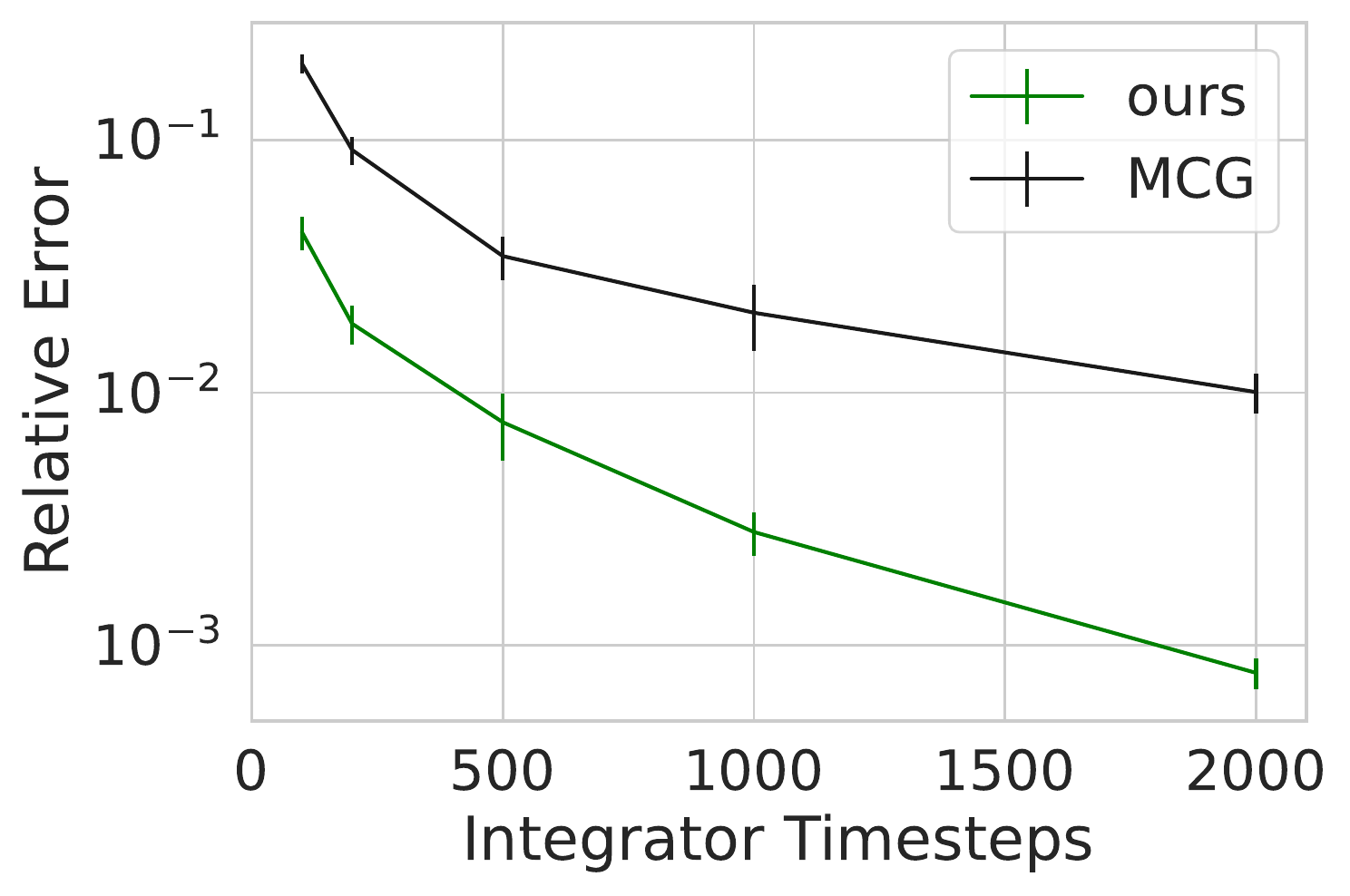}
    \vspace{-1.5mm}
    \caption{Convergence of conditional samples with varying numbers of ODE integrator steps. Due to the missing scaling factors of MCG \citep{chung2022improving}, the method requires many more integration steps to converge.}
    \label{fig:integrator_convergence}
\end{figure}
When considering equality constraints, both Manifold Constrained Gradients (MCG) \citep{chung2022improving} (linear constraints only) and Diffusion Posterior Sampling  (DPS) \citep{chung2022diffusion} (nonlinear equality constraints with or without noise)  can be used for inference time conditional sampling. While these two approaches are effective in this setting, they are not well suited to the continuous time ODE formulation, a requirement for computing likelihoods through the change of variables formula.

The difference can be summarized on a one dimensional constraint $c^\top x=y$, for which the approach of \citet{chung2022diffusion} would give the conditional scores as:

\begin{equation}\label{eq:MCG}
   \nabla_{x_t}\log p(x_t|y)  = s_\theta(x_t,t) - \alpha \nabla_{x_t}\|c^\top \hat{x}_0-y\|^2,
\end{equation}

whereas our method yields the scores 
\begin{equation}
   \nabla_{x_t}\log p(x_t|y)  = s_\theta(x_t,t) - \nabla_{x_t}\frac{\|c^\top \hat{x}_0-y\|^2}{\sqrt{c^\top \hat{\Sigma} c}}.
\end{equation}

The $\hat{\Sigma}$ matrix scales as $\sigma_t^2/s_t^2$ which varies by many orders of magnitude. While MCG and DPS include the tuneable parameter $\alpha$, it has one fixed value, and cannot match the $\sigma_t^2/s_t^2$ scale inside $\hat{\Sigma}$. As a result, when incorporating \autoref{eq:MCG} into adaptive step size integrators, the scales at different times of the integration will either be too small or too large in comparison to $s_\theta(x_t,t)$. We see this in practice that with MCG or DPS and an adaptive step size integrator, the integration fails to converge.

In \autoref{fig:integrator_convergence} we demonstrate that this is a problem even for fixed step size integrators too, by evaluating the convergence of the an ODE integrated trajectory sampling from a linear equality constraint with $c=\mathbf{1}/d$ and $y=0.08$ on the FitzHugh model. We measure the relative error between the conditional sample produced with $4000$ uniformly spaced ODE integrator steps and the conditional sample produced with smaller numbers of integrator steps. As shown in the figure, our conditional score functions lead to a much more rapidly converging solution.

\section{Dataset Construction}\label{app:datasets}
For all datasets, we generate $4000$ trajectories discretized to $60$ timesteps to be used for training, and hold out an additional $500$ trajectories for testing.

\textbf{Lorenz Attractor}
We generate trajectories from the classical dynamics
\begin{align}
    \dot{x} &= 10 (y-x) \\
    \dot{y} &= x(28-z) - y \\
    \dot{z} &= xy - (8/3) z 
\end{align}
Writing these components as the vector $\mathbf{x} = [x,y,z]$, these dynamics can be written as $\dot{\mathbf{x}} = F(\mathbf{x})$ for the above function $F$. Since these dynamics range over the scale $(-60,60)$ we rescale the dynamics by defining a rescaled version of the Lorenz dynamics: $\tilde{F}(\mathbf{x}) = F(20\mathbf{x})/20$, which preserves the dynamics but scales down the values into the range $(-3,3)$ which makes the learning of the diffusion model and Neural ODE more stable. We sample the initial conditions from the standard normal $\mathbf{x}(0) \sim \mathcal{N}(0,I)$, we integrate for a total of $10$ seconds and then discard the first $3$ seconds as burn-in to approximately reach the strationary distribution. The trajectories are then discretized to $60$ evenly spaced timesteps when training the diffusion model.

\textbf{FitzHugh Nagumo}
We follow \citet{farazmand2019extreme} in their choice of parameters to produce the rare events. We sample initial conditions from $\textbf{x}(0) \sim .2\mathcal{N}(0,I)$ on the $4$-dimensional system. We integrate for $4000$ units of time and discard the first $1500$ units of time for burn-in, which we find to be sufficient for the distribution to approximately reach the stationary distribution.

\textbf{Double Pendulum}
We use Hamiltonian from \citet{finzi2020simplifying} for the double pendulum in angular coordinates, and then integrate the Hamiltonian dynamics, with a final postprocessing step of converting the canonical momenta back into angular velocities. We set the mass and length parameters to $1$ for simplicity and integrate for $\tau=30$ seconds. We sample the initial from Gaussians. For the angles from $\theta(0)$ we use standardard deviation $\sigma=1$ and the initial momenta we use $\sigma=.2$ for $p_{\theta_1}$ and $\sigma=.3$ for $p_{\theta_2}$ .

When fitting the system with the diffusion model, we only proved the angle values, and we first embed the two angles into the plane using $\cos\theta_i$ and $\sin \theta_i$ for the two angles $i=1,2$. We performed this additional featurization since some trajectories wrap the angles around many times, and the large angle values can cause problems.
For the Neural ODE, we trained it in the usual way feeding in both $\theta_i$ and $\dot{\theta}_i$ as done in \citet{finzi2020simplifying}.

When training the NeuralODE \citep{chen2018neural}, for each of the systems, we chunk the 4000 length 60 trajectories into a total of 24000 chunks of length 10, which is a standard practice to improve the stability and efficiency of training (see e.g. \citep{finzi2020simplifying}. We train using the $L_1$ loss on the prediction error on the 10 evaluation points for each trajectory. The NeuralODE uses a simple 3-layer MLP with swish nonlinearities and $128$ hidden units.
\section{Training and Hyperparameters} \label{app:hyperparams}
When training, we sample diffusion times $t$ on a shifted grid following \citet{kingma2021variational} for reduced training times. We optimize the score matching loss with ADAM \citep{kingma2014adam} for $10000$ epochs with lr $10^{-4}$ and bs $500$. We use the variance exploding schedule for $\sigma_t$ and $s_t$, as described in \citet{song2020score}. We evaluate all models using the exponential moving average of the parameters at the final epoch, where the EMA period is $2000$ epochs.

\section{Model} \label{app:architecture}
We use a convolutional UNet architecture similar to \citet{saharia2022photorealistic} but scaled down, with the self attention layers removed (we did not find them helpful at this scale), with a modified embedding to handle continuous input times (the method used in \citet{song2020score}, and with $2$D convolutions replaced with $1$D convolutions. At a high level, the architecture can be summarized with the following table with value $c=32$, and $d$ represents the input and output dimension.

\begin{table}[!ht]
\centering
\label{table:architecture}
\begin{tabular}{cc}
\multicolumn{2}{c}
{\bf{Convolutional UNet Architecture:} }\\
\hline
ResBlock($c$) & x4\\
Downsample(2) \\
ResBlock($2c$) & x8\\
Downsample(2) \\
ResBlock($4c$) & x8\\
\hline
SkipResBlock($4c$) & x8 \\
Upsample(2) \\
SkipResBlock($2c$) & x8 \\
Upsample(2) \\
SkipResBlock($c$) & x4\\
\hline
Conv($128$)\\
Conv($d$)
\end{tabular}
\end{table}

The resblock follows the standard structure, but with GroupNorm instead of BatchNorm, using swish nonlinearities, and 1D convolutions.
\begin{table}[!ht]
\centering
\label{table:resblock}
\begin{tabular}{ l }
\noalign{\medskip}
\bf{ResBlock(c):} \\
\hline
GroupNorm(groups=c//4)\\
Swish\\
Conv(channels=3, ksize=3)\\
GroupNorm(groups=c//4)\\
Swish\\
Conv(channels=3, ksize=3)\\
SkipConnection\\
\hline
\end{tabular}
\end{table}
SkipResBlocks utilize skip connections with the corresponding residual block in the downsampling portion of the network, as done in typical diffusion UNets such as in \citet{karras2022elucidating}.

For gradients of the Gaussian CDF function, we instead use the logit approximation $\Phi(z) \approx \sigma(1.6z)$ which is more numerically stable.

\section{Computing Likelihoods}
\label{app:likelihood}
While the two probabilities $p(x_0)$ and $p(x_0|E)$ can in principle be computed by integrating the probability flow ODEs (\autoref{eq:prob_flow}) forwards in time with the continuous change of variables formula used in FFJORD \citep{grathwohl2018ffjord,song2020score}, the variance of the likelihood estimator is large (requiring a large number of probe variables) and the integration times can be very long. The large variance is typically not a problem when averaging to form the average log likelihoods of the entire dataset (which reduces the variance) and also because only a small number of significant digits are required for the metrics. On the other hand, for our purposes where we need to compute the log likelihood on a single data point very precisely, the estimator variance is too large.

Instead of using the continuous change of variables approach, we instead use a fixed timestep 2nd order Heun integrator to control the compute time, and then consider this integrator just as some deterministic and invertible function $x_{t+\Delta t} = H(x_t)$. While one cannot apply the continuous change of variables formula with fixed timesteps, we can instead simply compute the Jacobian of the ODE integrator. With this Jacobian, we can compute the likelihoods exactly without sampling and even when the timesteps are large using the relation:
\begin{equation}
    \log p_{t=0}(x_0|E) = \log p_{t=1}(x_1) + \sum_i \log \mathrm{det} DH(x_{t_i}),
\end{equation}
where $p(x_1) = \mathcal{N}(0,\sigma_1^2)$. We compute the Jacobians exactly which removes the need for sampling, however it's possible to replace this computation with Jacobian vector products even for the discrete trajectory by using the matrix logarithm expansion and Russian roulette estimator used for Residual Flows \citep{chen2019residual}.

\section{Limitations}\label{app:limitations}
We introduced a general and principled method for sampling conditionally on user defined events according to nonlinear equality and inequality constraints. However, our method has several limitations.

\textbf{Scope}: Our method was designed for deterministic events, and while it is easy to extend to noisy measurements so long as they have Gaussian observation noise, for non Gaussian observation noise our approximation will not be valid. Furthermore, more complex set based event constraints (such as the values being contained in a given region) may not be easily expressed as equality or inequality constraints that are supported by our method. 

\textbf{Computational cost}: Our second order approximation requires computing Jacobian vector products with the diffusion score function which can be expensive. If this cost is prohibitive, we recommend using the cruder but still principled approximation
\begin{equation}
    p(C(x_0)|x_t) \approx \mathcal{N}(C(\hat{x}_0),(\sigma_t/s_t)^2\nabla C^\top \nabla C),
\end{equation}
when using our method. 
However, the costs of this approximation are still quadratic in the number of constraints $r$. In situations where the constraints are very high dimensional and even computing $\nabla C^\top \nabla C$ is not possible, we recommend falling back to the diagonal approximation $p(C(x_0)|x_t) \approx \mathcal{N}(C(\hat{x}_0),\lambda (\sigma_t/s_t)^2I)$, a version of \citet{chung2022diffusion} requiring the tunable $\lambda$ parameter.

Likelihood evaluation is even more expensive, requiring computation of the Jacobian log determinant of the ODE integration step. This cost is $O((md)^3)$ and is prohibitive for large signals such as images. In future work this can be addressed such as by using the Russian roulette estimator from \citet{chen2019residual} which will reduce the cost to $O(md)$. Finally, the marginal likelihood computation $p(E)$ via the likelihoods is difficult to estimate accurately due to the differencing of two similar values, we hope this can be addressed in future work.

\vfill

\end{document}